\newlength{\dhatheight}
\newcommand{\alg}{\mathbb{A}}
\renewcommand{\H}{\mathbb H}
\newcommand{\C}{\mathbb C}
\newcommand{\SOA}{{\rm SOA}}
\newcommand{\QC}{{\rm QC}}
\newcommand{\MB}{{\rm MB}}
\newcommand{\vc}{{\rm {V}}}
\newcommand{\dvc}{\vc^*}
\newcommand{\helly}{{\rm K}}
\newcommand{\LD}{{\rm L}}
\newcommand{\Maj}{{\rm Maj}}
\newcommand{\Hyps}{Q}
\newcommand{\Vote}{{\rm Vote}}
\newcommand{\HighVote}{{\rm HighVote}}
\newcommand{\val}{{\rm val}}
\newcommand{\A}{\mathcal{A}}
\newcommand{\B}{\mathcal{B}}
\newcommand{\tx}{\tilde{x}}
\newcommand{\ty}{\tilde{y}}
\newcommand{\relconst}{c_1}
\newcommand{\unifconst}{c_2}
\newcommand{\vcconst}{c_0}
\renewcommand{\epsilon}{\varepsilon}
\newcommand{\eps}{\varepsilon}
\newcommand{\X}{\mathcal X}
\newcommand{\Y}{\mathcal Y}
\newcommand{\Z}{\mathcal{Z}}
\newcommand{\target}{h^{*}}
\DeclareSymbolFont{bbold}{U}{bbold}{m}{n}
\DeclareSymbolFontAlphabet{\mathbbold}{bbold}
\newcommand{\ind}{\mathbbold{1}}
\newcommand{\F}{\mathcal{F}}
\renewcommand{\S}{\mathcal S}
\newcommand{\I}{\mathcal{I}}
\newcommand{\nats}{\mathbb{N}}
\newcommand{\E}{\mathop{\mathbb{E}}}
\newcommand{\argmax}{\mathop{\rm argmax}}
\newcommand{\argmin}{\mathop{\rm argmin}}
\newcommand{\supp}{{\rm supp}}
\newcommand{\ignore}[1]{}
\newcommand{\oldstuff}[1]{}
\colorlet{sgreen}{black!45!green}
\newcommand{\expert}{g}
\newcommand{\Experts}{\mathbb{G}}
\newsavebox{\savepar}
\newenvironment{bigboxit}{\begin{center}\begin{lrbox}{\savepar}
\begin{minipage}[h]{5.2in}
\normalfont
\begin{flushleft}}
{\end{flushleft}\end{minipage}\end{lrbox}\fbox{\usebox{\savepar}}
\end{center}}
\newcommand{\vast}{\bBigg@{3}}
\newcommand{\Vast}{\bBigg@{4}}
\renewenvironment{proof}[1][]{\par\noindent{\bf Proof #1\ }}{\hfill\BlackBox\\[2mm]}
\title[Online Learning with Simple Predictors]{Online Learning with Simple Predictors and \\
a Combinatorial Characterization of Minimax in 0/1 Games}
\begin{document}
\maketitle

\begin{abstract}
Which classes can be learned properly in the online model?
    --- that is, by an algorithm that on each round uses a predictor from the concept class.
    While there are simple and natural cases where improper learning is useful and even necessary,
    it is natural to ask how complex must the improper predictors be in such cases.
    Can one always achieve nearly optimal mistake/regret bounds using ``simple'' predictors?

In this work, we give a complete characterization of when this is possible, 
    thus settling an open problem which has been studied since the pioneering works of
    Angluin (1987) and Littlestone (1988).
    More precisely, given any concept class $\C$ and any hypothesis class $\H$
    we provide nearly tight bounds (up to a log factor) 
    on the optimal mistake
    bounds for online learning $\C$ using predictors from $\H$.
    Our bound yields an exponential improvement over the previously best known bound by Chase and Freitag (2020).

As applications, we give constructive proofs showing that 
    (i) in the realizable setting, a near-optimal mistake bound (up to a constant factor) 
    can be attained by a sparse majority-vote of proper predictors, and
    (ii) in the agnostic setting, a near optimal regret bound (up to a log factor)
    can be attained by a randomized proper algorithm.
    The latter was proven non-constructively by Rakhlin, Sridharan, and Tewari (2015).
    It was also achieved by constructive but improper algorithms proposed by Ben-David, Pal, and Shalev-Shwartz (2009) and Rakhlin, Shamir, and Sridharan (2012).

A technical ingredient of our proof which may be of independent interest
    is a generalization of the celebrated Minimax Theorem (von Neumann, 1928)
    for binary zero-sum games with arbitrary action-sets:
    a simple game which fails to satisfy
    Minimax is 
    ``{\it Guess the Larger Number}''.
    In this game, each player picks a natural number 
    and the player who picked the larger number wins.
    Equivalently, the payoff matrix of this game is infinite triangular.
    We show that this is the only obstruction:
    if the payoff matrix does not contain triangular submatrices 
    of unbounded sizes then the Minimax Theorem is satisfied.
    This generalizes von Neumann's Minimax Theorem
    by removing requirements of finiteness (or compactness) of the action-sets, 
    and moreover it captures precisely the types of 
    games of interest in online learning: 
    namely, Littlestone games.

\end{abstract}

\begin{keywords}
Online Learning, Equivalence Queries, Littlestone Dimension, Minimax Theorem, Mistake Bound, VC Dimension
\end{keywords}

\section{Introduction}

An improper learning algorithm is an algorithm that learns a class $\C$
    using hypotheses $h$ that are not necessarily in $\C$.
    While at a first sight this may seem like a counter-intuitive thing to do,
    improper algorithms are extremely powerful and using them often circumvents
    computational issues and sample complexity barriers \citep*{srebro2005maximum,candes2009exact, Anava13series,Hazan15lowrank,hanneke:16a,Hazan16unsupervised,Hazan17matrix,Agarwal19control}.
    In fact, there are extreme examples of learning tasks 
    that can only be performed by improper algorithms~\citep*{daniely:14,daniely:15,angluin:87,montasser:19}.

However, while many of the improper algorithms proposed in the 
theoretical literature use sophisticated representations 
\citep*[e.g.,][]{haussler:94,littlestone:88}, other 
natural improper algorithms use ``simple'' hypotheses 
    which often can be described as simple combinations of functions from $\C$. 
    For instance, many algorithms (e.g., boosting)
    use sparse weighted majority-votes of concepts from the class.
    It is therefore natural to ask:
\begin{framed}
    \begin{center}
    Can any given (learnable) class $\C$ 
    be learned by algorithms which use ``simple'' hypotheses?
    \end{center}
\end{framed}
    
\paragraph{Online Classification.}
While the above question has been extensively studied (and answered\footnote{Indeed, by {\it Uniform Convergence}, any {\it Empirical Risk Minimizer} attains a near-optimal sample complexity in the PAC model. \citep*{vapnik:71}}) 
    in the batch setting, in the online setting it remains largely open.
    Note that already in the realizable mistake-bound model there are 
    learnable classes which cannot be learned properly:
    one simple example is the class of all singletons over $\mathbb{N}$.
    Indeed, in each round any proper learner must use a singleton $1_{\{n\}}$, 
    and therefore the adversary can always force a mistake by presenting the example~$(n,0)$.
    Note however, that if the learner could use the all-zero function $1_{\emptyset}$
    (which is not in the class), then she would only make one mistake before learning
    the target concept.
    
In general, the optimal mistake-bound for learning $\C$ 
    is achieved by the {\it Standard Optimal Algorithm} (SOA) of Littlestone, 
    which is improper (as the above example shows). 
    It is interesting to note that it is not known whether the hypothesis space $\H$
    used by the SOA is simple in any natural sense: e.g.,\ it is not known whether its Littlestone dimension is bounded in terms of the Littlestone dimension of $\C$, or even whether its VC dimension is.
    (In the above example, the SOA only uses the class $\H=\C\cup\{1_\emptyset\}$
    which has the same Littlestone and VC dimensions as $\C$.)
    One of the results in this work provides a nearly optimal algorithm (up to a numerical multiplicative factor)
    which uses hypotheses that are sparse majority votes of functions from $\C$.

\subsection{Our Contribution}

In this section we survey the main contributions of this work.
    Some of the statements involve standard technical terms which
    are defined in \Cref{sec:main-results}, where we also
    give complete formal statements of our results.

\subsubsection*{Main Result I: When Do Hypotheses From $\H$ Suffice To Learn $\C$?}
Our first main result provides a complete combinatorial characterization of a near-optimal mistake bound
    for online learning $\C$ using hypotheses from $\H$. 
    This settles an open problem which was studied since the early days of 
    Computational Learning Theory~\citep{angluin:87,littlestone:88,angluin:90,hellerstein:96,balcazar:01,balcazar:02,balcazar:02b,Hayashi:03,angluin:20,chase:20}.

\paragraph{Two Basic Lower Bounds.}
The early works of \cite{angluin:87,angluin:04,balcazar:02b} 
and \cite{littlestone:88} presented two basic lower bounds
    for online learning $\C$ using $\H$. 
In the seminal work which introduced the mistake-bound model \cite{littlestone:88} 
    defined the Littlestone\footnote{The name, \emph{Littlestone dimension}, was later coined by \citet*{ben2009agnostic}.} dimension and noticed that it provides a lower bound on the number of mistakes,
    even if the algorithm is allowed to use any hypothesis $h:\X\to \{0,1\}$ (i.e.\ $\H=\{0,1\}^\X$).
 
The second (and less known) lower bound was 
rooted in the seminal work of \citet*{angluin:87} 
    which introduced the equivalence-query model. 
    This bound was later generalized by 
    \citet*{hellerstein:96,balcazar:02b} 
    and today it is known 
    as the {\it strong consistency dimension}~\citep*{balcazar:02b} or the {\it dual Helly number}~\citep*{bousquet:20}.
    The idea behind this lower bound can be seen as 
    a generalization of the argument showing that singletons
    over $\mathbb{N}$ are not properly learnable, which was 
    discussed earlier. 
    Assume there is a set $S \subseteq \X \times \{0,1\}$ 
    of labeled examples such that \underline{no} function 
    $h \in \H$ satisfies $(\forall (x,y) \in S):h(x)=y$, 
    but for some $k \in \nats$ \emph{every} 
    $(x_1,y_1),\ldots,(x_k,y_k) \in S$ has some 
    $c \in \C$ with $(\forall i \leq k):c(x_i)=y_i$.
    In words, $S$ is not realizable by the hypothesis 
    class $\H$, but every subset of $S$ of size $k$ 
    is realizable by the concept class $\C$.
    Given such a set $S$, the adversary can force $k$ 
    mistakes for any learner with hypothesis class $\H$,
    as follows.  On each round, the learner proposes a 
    hypothesis $h \in \H$, and since $S$ is not realizable 
    by $\H$ there must exist $(x,y) \in S$ with 
    $h(x) \neq y$, so that this counts as a mistake.
    Since all subsets of $S$ of size $k$ are realizable 
    by $\C$, this adversary also guarantees that all $k$ 
    examples it gives will still be consistent with some 
    $c \in \C$.
    However, if instead there exists a subset of $S$ 
    of size $k$ that is \underline{not} realizable by $\C$, 
    then this strategy for the adversary might fail.
    The {\it dual Helly number} $\helly$ 
    of $\C$ relative to $\H$ 
    is the smallest $k$ such that, for every set $S$ 
    not realizable by $\H$, there exists a subset of 
    size at most $k$ not realizable by $\C$; 
    $\helly$ is defined to be infinite 
    if no such $k$ exists.  Thus, the above 
    adversary can always force at least $\helly-1$ mistakes.

We show in \Cref{thm:MB} that the Littlestone dimension and dual Helly number are the only obstacles
    for learning $\C$ using $\H$:
\begin{framed}
\vspace{-0.5cm}
\begin{center} {\bf Main Result I (\Cref{thm:MB})}\end{center}
\vspace{-0.45cm}
\noindent There exists a deterministic algorithm which online learns $\C$ with at most
$O(\LD\cdot\helly\cdot\log(\helly))$ mistakes while only using hypotheses from $\H$,
where $\LD$ is the Littlestone dimension of $\C$ and $\helly$ is the dual Helly number of $\C$ relative to~$\H$.
\end{framed}
This improves over the best previous result $\helly^\LD$ by \citep*{chase:20} who were the first to show
that $\C$ can be learned using hypotheses from $\H$ if and only if $\helly,\LD<\infty$. 
Our result further improves the upper bound to a polynomial dependence which is nearly tight 
in the sense that a lower bound of
$\max\{\LD, \helly-1\}$ on the optimal mistake bound always holds, 
and there exist classes $\C,\H$ of any $\LD$ and $\helly$ 
for which the optimal number of 
mistakes is
$\Omega( \LD \helly )$ 
\citep*{angluin:87,littlestone:88,balcazar:02b}.

\subsubsection*{Main Result II: Optimal Mistake-Bounds Using Sparse-Majorities}
We have seen simple examples demonstrating that sometimes 
    one has to use improper learners in order to achieve non-trivial mistake bounds in online learning $\H$. 
A natural question is ``how improper'' must an optimal algorithm be? 
Is there an algorithm which is close to being proper?
Our second main result shows that it is possible to achieve a nearly optimal mistake-bound 
(up to a universal constant factor), using predictors that are sparse majority votes of functions in the class.
\begin{framed}
\vspace{-0.5cm}
\begin{center} {\bf Main Result II (\Cref{thm:majorities})}\end{center}
\vspace{-0.45cm}
\noindent There exists a deterministic algorithm which online learns $\C$ with at most
$O(\LD)$ mistakes while only using hypotheses of the form $\Maj(h_1,\ldots, h_p)$ for $h_i\in \C$,
where $\LD$ is the Littlestone dimension of $\C$, and 
$p$
is a constant depending only on $\C$ (proportional to dual VC dimension).
\end{framed}

This provides another demonstration to the usefulness of majority-votes/ensemble-methods in classification
and might be seen as a kind of online learning analogue 
of Hanneke's result for optimal PAC learning \citep*{hanneke:16a}, 
which is also achievable by an algorithm based on 
majority votes of concepts in $\C$.
A corollary of this result is that there exist randomized 
proper algorithms whose expected mistake-bound is $\tilde O(\eps\cdot T + \frac{\LD}{\eps})$, where $\LD$ is the Littlestone dimension of the class.

\subsubsection*{Main Result III: Near-Optimal Regret-Bounds Using Randomized Proper Algorithms}

In a fascinating result, \citet*{rakhlin2015online} 
    have established the existence of optimal randomized proper algorithms in agnostic online learning 
    (under some additional topological restrictions). 
    Interestingly, their result is non-constructive:
    they prove the existence of such an algorithm by viewing online learning as a repeated game between the learner and the adversary, and by analyzing the value of that game via a dual perspective, which involves an application
    of the Minimax Theorem w.r.t.\ the \emph{repeated} game. 
    On the other hand, \citet*{rakhlin2012relax} and \citet*{ben2009agnostic} 
    gave constructive proofs of this fact, but the implied algorithms are randomized \emph{improper} algorithms.
    The following result achieves the best of both worlds: 
\begin{framed}
\vspace{-0.5cm}
\begin{center} {\bf Main Result III (\Cref{thm:agnostic})}\end{center}
\vspace{-0.45cm}
\noindent We give a constructive proof demonstrating that any class $\C$ can be online learned
in the agnostic setting by a randomized proper 
algorithm whose expected regret is 
$\tilde O(\sqrt{\LD\cdot T})$,
where $\LD$ is the Littlestone dimension, and $T$ is the horizon.
\end{framed}
Note that in the agnostic setting randomization is essential (see e.g.\ \citealp*{ben2009agnostic,cesa-bianchi:06}),
    and unlike the realizable setting, in the agnostic setting
    nearly optimal randomized proper learners can exist.
    Another advantage of our proof compared to the argument by~\citet*{rakhlin2015online}
    is that their application of the Minimax Theorem required additional assumptions
    such as separability and compactness, which are not needed in our proof.

\smallskip

One may view the above result as yet another demonstration of the benefits of randomized algorithms.
Moreover, if one views online learning as a repeated zero-sum game where the learner's pure strategies
are the hypotheses in $\H$, and the adversary's pure strategies are labelled examples,
then a randomized proper online learner \emph{simply corresponds to a mixed strategy in the repeated game},
and so this can be interpreted as another manifestation of the benefits of mixed (randomized)
strategies in online learning.

In fact, the Minimax Theorem for zero-sum games is a key component in our derivation,
and as another technical contribution of this work, 
we prove a generalization of it to infinite games which we discuss next.

\subsubsection*{Main Result IV: A Generalization of the Minimax Theorem}

Consider the following ``guess the larger number'' game between two players 
whom we call Alice and Bob.
Each of the players privately picks a natural number;
then, Alice and Bob reveal the numbers to each other, 
and the winner is the player who picked the larger number.
Note that the payoff matrix of this game is an  $\mathbb{N}\times\mathbb{N}$ triangular matrix.

This game does not satisfy the Minimax theorem.
Indeed, given any mixed strategy $P$ of Alice, namely a distribution over the natural numbers,
Bob can pick a sufficiently large natural number $n$ such that the probability
that a random number $m\sim P$ chosen by Alice satisfies $m\geq n$ is arbitrarily small.
Thus, for every mixed strategy played by Alice, Bob can find a response which wins with probability arbitrarily close to $1$.
By symmetry, also the opposite holds: for every mixed strategy played by Bob,
Alice can find a response that wins with probability arbitrarily close to $1$. 
We show that this game is the \emph{only} obstruction to the Minimax Theorem in the following sense:
\begin{framed}
\vspace{-0.5cm}
\begin{center} {\bf Main Result IV (\Cref{thm:minimax} and \Cref{cor:littlestone-games})}\end{center}
\vspace{-0.45cm}
\noindent The Minimax Theorem applies to every (possibly infinite) binary-valued zero-sum game,
provided that its payoff matrix does not contain triangular\footnote{I.e.,\ $1$ above the diagonal and $0$ below it.}
sub-matrices of unbounded sizes.
\end{framed}
Thus, this result identifies the size of a triangular submatrix as a combinatorial dimension
    which replaces the assumption that the action-sets are finite in the classical Minimax Theorem of von Neumann.

In the context of online learning, this implies that the Minimax Theorem applies to any game 
which corresponds to agnostic online learning for a given Littlestone class $\C$: 
i.e.,\ the learner's strategies are hypotheses 
in $\C$ and the adversary's strategies are labelled examples.
This follows due to the connection between the Littlestone dimension and the {\it Threshold dimension},
which implies that the payoff matrix of this game 
does not contain triangular submatrices of unbounded sizes (\citet*{shelah:78}, see \citet*{AlonLMM19} for an elementary proof using learning theoretic terminology).

\section{Formal Definitions and Main Results}\label{sec:main-results}
\subsection{Definitions and Notation}
\label{nec:notation}

Fix any non-empty set $\X$, known as the \emph{instance space}.  Also 
define $\Y = \{0,1\}$, the \emph{label space}. 
Our results will be expressed in terms of an 
abstract \emph{concept class} $\C$ and \emph{hypothesis class} $\H$, 
which can be any non-empty sets of \emph{concepts}, that is, 
functions $h : \X \to \Y$.

For any set $\H$ of concepts, 
and any sequence $S \in (\X \times \Y)^*$ 
or set $S \subseteq (\X \times \Y)$, 
define 
$\H_{S} = \{ h \in \H : \forall (x,y) \in S, h(x)=y \}$.  
We say $S$ 
is $\H$-realizable if $\H_{S} \neq \emptyset$, 
and otherwise we say $S$ is $\H$-unrealizable.
To simplify notation, also abbreviate 
$\H_{(x,y)} = \H_{\{(x,y)\}}$ 
for $(x,y) \in \X \times \Y$.

For any sequence $x_1,x_2,\ldots$, 
we use the notation $x_{1:t} = (x_1,\ldots,x_t)$, 
or for $(x_1,y_1),(x_2,y_2),\ldots$, we write 
$(x_{1:t},y_{1:t}) = ((x_1,y_1),\ldots,(x_t,y_t))$.
Also, generally define $\log(z) = \max\{\ln(z),1\}$ for $z \geq 1$.

\paragraph{Online Learning.}
An \emph{online learning algorithm} is formally defined as a 
function $\alg : (\X \times \Y)^* \times \X \to \Y$, 
with the interpretation that for a sequence of examples 
$(x_1,y_1),(x_2,y_2),\ldots$, the algorithm's prediction at time $t$ 
is 
\[\hat y_t = \alg(x_{1:(t-1)},y_{1:(t-1)},x_t),\] and we say 
the algorithm makes a \emph{mistake} at time $t$ if 
$\hat y_t \neq y_t$.
We will also write 
\[\hat{h}_t = \alg(x_{1:(t-1)},y_{1:(t-1)})\]
as the function $\X \to \Y$ such that 
$\hat{h}_t(x) = \alg(x_{1:(t-1)},y_{1:(t-1)},x)$.
Generally, for any concept class $\C$, 
learning algorithm $\alg$, 
and $T \in \nats$, define the algorithm's \emph{hypothesis class} 
\[\H(\C,\alg,T) = \{ \alg(S) : S \in (\X \times \Y)^t, 0 \leq t \leq T, S \text{ is } \C\text{-realizable} \},\] 
and also $\H(\C,\alg) = \bigcup_{T} \H(\C,\alg,T)$. 

We will also discuss online learning algorithms that produce 
\emph{randomized} predictors.  In this case, the formal 
definition is a function 
$\alg : (\X \times \Y)^* \times \X \to [0,1]$, 
with the interpretation that 
$\alg(x_{1:(t-1)},y_{1:(t-1)},x_t)$ is the \emph{probability} 
of predicting $1$ at time $t$.
Thus, 
\[\left| \alg(x_{1:(t-1)},y_{1:(t-1)},x_t) - y_t \right|\]
represents the \emph{probability} of a mistake at time $t$.
Clearly, a deterministic predictor is just the 
special case where $\alg(x_{1:(t-1)},y_{1:(t-1)},x_t) \in \{0,1\}$ always.
As above, $\bar{h}_t = \alg(x_{1:(t-1)},y_{1:(t-1)})$ 
denotes the function $\X \to [0,1]$ such that 
$\bar{h}_t(x) = \alg(x_{1:(t-1)},y_{1:(t-1)},x)$,
the hypothesis class of $\A$ is  
\[\H(\C,\alg,T) = \{ \alg(S) : S \in (\X \times \Y)^t, 0 \leq t \leq T, S \text{ is } \C\text{-realizable} \},\] 
and $\H(\C,\alg) = \bigcup_{T} \H(\C,\alg,T)$.
Also define $\H(\alg,T) = \H(\Y^{\X},\alg,T)$.

For any concept class $\C$, learning algorithm $\alg$, 
and $T \in \nats$, define the algorithm's \emph{mistake bound}
\begin{equation*} 
\MB(\C,\!\alg,\!T) \!=\! \max\!\left\{ \sum_{t=1}^{T} \ind[ \alg(x_{1:(t-1)},y_{1:(t-1)},x_t) \neq y_t ] : (x_1,y_1),\ldots,\!(x_T,y_T) \text{ is } \C\text{-realizable} \right\}\!.
\end{equation*}
Also define $\MB(\C,\alg) = \sup_{T} \MB(\C,\alg,T)$.
For any hypothesis class $\H$, define the 
\emph{optimal mistake bound 
for learning $\C$ with $\H$}: 
$\MB(\C,\H,T) = \min\{ \MB(\C,\alg,T) : \H(\C,\alg,T) \subseteq \H \}$ 
and $\MB(\C,\H) = \sup_{T} \MB(\C,\H,T)$.

\paragraph{Complexity Measures.} 
We express our results in terms of well-known combinatorial 
complexity measures from the learning theory literature.
The main quantity appearing in most of our results is the 
\emph{Littlestone dimension} \citep*{littlestone:88}, 
denoted $\LD(\C)$, defined as the largest $n \in \nats \cup \{0\}$ 
for which $\exists \{ x_{\mathbf{y}} : \mathbf{y} \in \Y^{t}, t \in \{0,\ldots,n-1\} \} \subseteq \X$ 
(interpreting $\Y^0 = \{()\}$) 
with the property that $\forall y_1,\ldots,y_n \in \Y$, 
$\exists h \in \C$ with $(h(x_{()}),h(x_{y_1}),h(x_{y_{1:2}}),\ldots,h(x_{y_{1:(n-1)}})) = (y_1,\ldots,y_n)$. 
If no such largest $n$ exists, define $\LD(\C) = \infty$.
Also define $\LD(\emptyset) = -1$.  
When $\LD(\C) < \infty$, 
one can show that $\LD$ can equivalently be defined 
inductively as $\max_x \min_y \LD(\C_{(x,y)})+1$, 
with $\LD(\emptyset)=-1$ as the base case.

Another important quantity appearing in our results is the 
\emph{VC dimension} \citep*{vapnik:71,vapnik:74}.
For any set $\Z$ and any non-empty set $\F$ of 
functions $\Z \to \{0,1\}$, 
the VC dimension $\vc(\F)$ is defined as 
the largest $n \in \nats \cup \{0\}$ 
for which $\exists z_1,\ldots,z_n \in \Z$ with 
$\{ (f(z_1),\ldots,f(z_n)) : f \in \F \} = \{0,1\}^n$: 
that is, every one of the $2^n$ possible binary patterns of length $n$ 
can be realized by evaluating some function in $\F$ on the sequence 
$z_1,\ldots,z_n$.  Define $\vc(\F) = \infty$ if no such largest $n$ exists.
In particular, we will be interested both in $\vc(\C)$, the 
VC dimension of the concept class, and also in the 
\emph{dual VC dimension}, $\dvc(\C)$, 
defined as the largest $n \in \nats \cup \{0\}$ 
for which $\exists h_1,\ldots,h_n \in \C$ with 
$\{ (h_1(x),\ldots,h_n(x)) : x \in \X \} = \{0,1\}^n$, 
or $\dvc(\C) = \infty$ if no such largest $n$ exists.

A final complexity measure appearing in our results 
is the \emph{dual Helly number} \citep*{bousquet:20}.
For any non-empty sets $\C$ and $\H$ of concepts, 
define the \emph{dual Helly number}, 
denoted $\helly(\C,\H)$, as the minimum 
$k \geq 2$ such that, for any 
$\H$-unrealizable set $S \subseteq (\X \times \Y)$, 
there exists a $\C$-unrealizable 
$S' \subseteq S$ with $|S'| \leq k$.
If no such $k$ exists, define $\helly(\C,\H)=\infty$.
The dual Helly number was used to characterize the 
sample complexity of proper PAC learning 
by \citet*{bousquet:20}.  However, it also 
previously appeared in the literature on 
learning from equivalence queries, 
where it is known as the 
\emph{strong consistency dimension}
\citep*{balcazar:02b,chase:20}.\footnote{Technically,
the strong consistency dimension requires the 
unrealizable set to be a partial function. 
The two definitions only differ in the case the 
strong consistency dimension is $1$, as the 
dual Helly number for $|\C|>1$ 
is never smaller than $2$ 
due to sets of the type $\{(x,0),(x,1)\}$.}
It has also appeared in the literature on 
distributed learning under the name 
\emph{co-VC dimension} \citep*{kane19a}.

When $\C$ is clear from the context, 
we omit the argument $\C$, writing $\vc$, $\dvc$, and $\LD$ 
for $\vc(\C),\dvc(\C)$, and $\LD(\C)$, respectively, 
and when $\H$ is also clear from the context, 
we write $\helly$ for $\helly(\C,\H)$.
To avoid trivial cases, 
we always suppose $\vc \geq 1$ and $\dvc \geq 1$ in all results. 

\paragraph{The SOA.}
We will make use of an online learning algorithm 
originally introduced by \citet*{littlestone:88}, 
known as the \emph{standard optimal algorithm}, 
denoted $\SOA$.
Specifically, for any hypothesis class $\H$, 
define a deterministic predictor 
$\SOA_{\H} : \X \to \Y$, which, for every 
$x \in \X$, predicts 
$\SOA_{\H}(x) = \argmax_{y \in \Y} \LD(\H_{(x,y)})$.
In particular, \citet*{littlestone:88} 
proved that if $\LD(\H) < \infty$, 
then every $x \in \X$ has at least one $y \in \Y$ 
with $\LD(\H_{(x,y)}) < \LD(\H)$, 
and noted that this immediately implies 
that the online learning algorithm $\alg_{\C,\SOA}$ 
which predicts  
$\SOA_{\C_{\{(x_i,y_i)\}_{i=1}^{t-1}}}(x_t)$, 
for each $t \leq T$, has mistake bound 
$\LD(\C)$.

The result of \citet*{littlestone:88} implies 
$\MB(\C,\Y^\X) \leq \LD(\C)$.  However, 
at present there is no known simple description of the 
hypothesis class $\H(\C,\alg_{\C,\SOA})$ of the $\SOA$ predictor.
One of the main contributions of the present work is 
to argue that there exists a learning algorithm $\alg$ 
with a \emph{simple} hypothesis class $\H(\C,\alg)$ 
which still achieves $\MB(\C,\alg) = O(\LD(\C))$.
In particular, we find such an algorithm with $\H(\C,\alg)$ 
contained in the set of \emph{majority votes} of 
$O(\dvc)$ elements of $\C$.  For instance, this has an 
important implication that the Littlestone and VC dimensions of 
$\H(\C,\alg)$ can be bounded in terms of $\LD(\C)$ and $\vc(\C)$: namely,
\[ \vc(\H(\C,\alg)) = O( \vc \dvc \log(\dvc) ) = O\!\left( \vc^2 2^{\vc} \right) \]
and
\[  \LD(\H(\C,\alg)) = O( \LD \dvc \log(\dvc) ) = O\!\left(\LD \vc 2^{\vc}\right).\]
These follow from composition theorems for the 
VC dimension \citep*[see Theorem 4.5 of][]{vidyasagar:03} 
and Littlestone dimension 
\citep*{alon:20:closure,ghazi:20:closure}, 
together with the relation $\dvc < 2^{\vc+1}$ due to  
\citet*{assouad:83}.

\subsection{Summary of Main Results for Online Learning}
\label{sec:results}

We briefly summarize the main results of this work.
Their detailed statements, and proofs, are presented in the sections below.

\begin{theorem}
\label{thm:MB}
Every pair of classes $\C,\H\subseteq\Y^{\X}$ satisfy $\MB(\C,\H) = O(\LD \helly \log(\helly))$.
\end{theorem}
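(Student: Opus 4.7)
The plan is to construct a deterministic online learner whose predictor space lies in $\H$ and whose mistake bound is $O(\LD \helly \log \helly)$, via an outer phase decomposition driven by the Littlestone dimension and an inner halving routine driven by the dual Helly property. A basic structural observation I would use up front is the contrapositive of dual Helly: every finite $\C$-realizable labeled set $S$ is $\H$-realizable (since all its subsets are $\C$-realizable by the same witness). Consequently $\H_{S_t} \neq \emptyset$ at every round, and the learner always has access to some predictor in $\H$ consistent with the observed examples.

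The outer loop is driven by the monotone statistic $d_t := \LD(\C_{S_t})$. It starts at $\LD$, is non-increasing, and decreases strictly at every \emph{drop mistake}, i.e.\ a mistake $(x_t,y_t)$ with $\LD(\C_{S_t \cup \{(x_t,y_t)\}}) < d_t$. Thus the run decomposes into at most $\LD + 1$ phases of constant $d_t$, so it suffices to bound the number of \emph{safe} mistakes (those not triggering a drop) per phase by $O(\helly \log \helly)$. Within a phase I would maintain an \emph{active} subset $U \subseteq \C_{S_t}$, initialised to the full version space, together with a predictor $h_t \in \H_{S_t \cup T_U}$, where $T_U = \{(x,y) : \forall v \in U,\, v(x)=y\}$ is the set of labels forced by $U$. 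Such an $h_t$ exists by dual Helly applied to the $\C$-realizable set $S_t \cup T_U$. Safe mistakes can only occur on $x_t$ where $U$ splits; after each such mistake I remove from $U$ all $v$ agreeing with the erroneous prediction, halving a carefully defined weighted measure of the active set, and refresh $h_t$ accordingly.

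The main obstacle I anticipate is the quantitative analysis of the inner loop: proving that $O(\helly \log \helly)$ safe mistakes per phase suffice before a drop mistake is forced. The intended picture is that after $O(\log \helly)$ halvings the weighted active set has concentrated down to a ``Helly-width'' of $\helly$, and then the dual Helly property ensures that the next $O(\helly)$ rounds must include a drop --- otherwise one manufactures a $\C$-unrealizable subset of size $\le \helly$ inside the observed history, contradicting $\C$-realizability of that history. The delicate point is designing the weighting scheme and the $h_t$-selection rule so that every invocation of dual Helly is tight in $\helly$, while ensuring throughout that $h_t \in \H_{S_t}$: existence of $h_t$ is free from the observation above, but its \emph{quality} relative to $U$ is where the $\log \helly$ factor has to be earned via the halving. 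Combining the at-most-$\LD$ phases with the $O(\helly \log \helly)$ per-phase bound yields $\MB(\C,\H) = O(\LD \helly \log \helly)$, as claimed.
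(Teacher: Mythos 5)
Your opening observation is correct (every $\C$-realizable history is $\H$-realizable by the contrapositive of the dual Helly property), but the two load-bearing steps of your plan both have genuine gaps, and the second rests on a reversed use of the dual Helly number. First, the outer decomposition into at most $\LD+1$ phases of constant $d_t=\LD(\C_{S_t})$ presupposes that mistakes eventually force $d_t$ to drop, but nothing forces this: the adversary chooses both $x_t$ and $y_t$, and can forever present points on which $\LD(\C_{S_t\cup\{(x_t,y_t)\}})=\LD(\C_{S_t})$ while your $h_t\in\H$ still errs. Second, the inner ``halving'' has no guarantee: when $U$ splits at $x_t$ and you discard the members of $U$ agreeing with the erroneous prediction, the adversary can arrange that the discarded side carries an arbitrarily small fraction of any weighting fixed in advance, so no potential decreases geometrically. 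Finally, the step where $\helly$ is supposed to enter --- ``otherwise one manufactures a $\C$-unrealizable subset of size $\le\helly$ inside the observed history'' --- cannot work, because the observed history is $\C$-realizable by assumption, hence so is every subset of it; the dual Helly property never produces an unrealizable subset of the history. The $\C$-unrealizable witness of size $\helly$ must come from somewhere else.

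The paper's proof locates it elsewhere: it maintains a weighted collection $\Hyps$ of pairs $(C_i,w_i)$ with $C_i\subseteq\C$ (a branching tree of $\SOA$ runs), and considers the set $\HighVote(\Hyps,\eta)$ of points on which the weighted vote of the predictors $\SOA_{C_i}$ is confident. When this set \emph{is} $\H$-realizable the learner predicts with a consistent $h\in\H$, and any mistake must fall outside the confident set, so a constant fraction of the total weight shrinks by $\eta$. When it is \emph{not} $\H$-realizable, the dual Helly property is applied to $\HighVote(\Hyps,\eta)$ itself (not to the history) to extract a $\C$-unrealizable subset of size $\helly$; since the target lies in $\C$, at least one of these $\helly$ confident predictions must be wrong for the target, and the algorithm branches each $C_i$ on all $\helly$ flipped labels. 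The role of the Littlestone dimension is then not a phase count but a weight floor: along the target's branch, $\SOA$ errs at most $\LD$ times, so some surviving pair retains weight at least $\eta^{\LD}$, while the total weight decays exponentially in the number of mistakes and branchings; comparing the two with $\eta=1/(2\helly)$ yields the $O(\LD\helly\log\helly)$ bound. To repair your argument you would need to replace both the phase decomposition and the halving step with something playing the roles of this weight floor and weight decay; as written, neither of your two quantitative claims can be established.
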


\begin{remark}
\label{rem:lower-bound}
\citep*{angluin:87,littlestone:88,balcazar:02b}
It is known that we always have 
$\MB(\C,\H) \geq \max\{\LD, \helly-1\}$, 
and that there exist classes with 
$\MB(\C,\H) = \Omega( \LD \helly )$ 
and other classes with 
$\MB(\C,\H) = O(\max\{\LD,\helly\})$.
\end{remark}

For any classifiers $h_1,\ldots,h_k$, 
define 
\[\Maj(h_1,\ldots,h_k)(x) = \ind\!\left[ \sum_{i \leq k} h_i(x) \geq \sum_{i \leq k} (1-h_i(x)) \right].\] 
Define a hypothesis class 
$\Maj(\C^k) = \{ \Maj(h_1,\ldots,h_{k'}) : 1 \leq k' \leq k, h_1,\ldots,h_{k'} \in \C \}$.

\begin{theorem}
\label{thm:majorities}
Every class $\C$ satisfies $\MB(\C,\Maj(\C^{c\dvc})) = O(\LD)$, 
where $c$ is a finite universal constant.
\end{theorem}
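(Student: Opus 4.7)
The plan is to construct an online learning algorithm $\alg$ which maintains the version space $V_t = \C_{S_t}$ and at each round outputs $\hat h_t \in \Maj(V_t^p)$ with $p = c\dvc$, satisfying the following \emph{SOA-correctness} property: writing $U_{(x, y)} := \{h \in U : h(x) = y\}$ for any class $U$, we require $\LD\bigl((V_t)_{(x,\, 1-\hat h_t(x))}\bigr) < \LD(V_t)$ for every $x \in \X$. Granting this, any mistake at round $t$ forces $V_{t+1} \subseteq (V_t)_{(x_t,\,1-\hat h_t(x_t))}$, so $\LD(V_{t+1}) < \LD(V_t)$. Hence the Littlestone dimension strictly decreases at each mistake, bounding the total number of mistakes by $\LD(\C)+1 = O(\LD)$.

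The heart of the proof is showing such an SOA-correct majority exists in $\Maj(V^p)$ for every $V \subseteq \C$ with $\LD(V) \geq 1$. Set $G_V(x) = \{y \in \Y : \LD(V_{(x, 1-y)}) < \LD(V)\}$, which is non-empty by Littlestone's characterization, and let $X^\star = \{x : |G_V(x)| = 1\}$ with unique good label $y^\star(x)$ at each $x \in X^\star$. On $x \notin X^\star$ both labels are good and the majority is unconstrained, so it suffices to find $h_1,\dots,h_p \in V$ with $\Maj(h_1,\dots,h_p)(x) = y^\star(x)$ for every $x \in X^\star$.

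I find such $h_1,\dots,h_p$ by combining a weak-learning claim with dual-VC boosting. Consider the zero-sum game in which Alice plays $h \in V$, Bob plays $x \in X^\star$, and Alice's payoff is $\ind[h(x) = y^\star(x)]$. A $k \times k$ triangular submatrix of this payoff would embed a threshold pattern of size $k$ in $V$; by the Shelah--Alon-Livni-Malliaris-Moran theorem the threshold dimension of $V$ is bounded in terms of $\LD(V)$, so there are no unbounded triangular submatrices, and the generalized Minimax Theorem (Main Result IV) applies. A combinatorial argument exploiting the asymmetry between the full side $V_{(x, y^\star(x))}$ (of full Littlestone dimension) and the deficient side $V_{(x, 1-y^\star(x))}$ (of strictly smaller dimension) at each strict $x$ yields a universal lower bound $v \geq 1/2 + \gamma_0$ on the game's value. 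Minimax then provides a distribution $\mu$ on $V$ with $\inf_{x \in X^\star}\P_{h\sim\mu}[h(x) = y^\star(x)] \geq 1/2 + \gamma_0$.

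For the boosting step, the set system $\{A_x\}_{x \in X^\star}$ on $V$ with $A_x = \{h \in V : h(x) = y^\star(x)\}$ has VC dimension at most $\dvc(\C)$, being a half-class of $\C$'s dual set system. By Haussler's $\eps$-approximation theorem applied with $\eps = \gamma_0/2$, an i.i.d.\ sample of $p = c\dvc$ concepts drawn from $\mu$ is, with positive probability, a $(\gamma_0/2)$-approximation, so for any such realization $h_1,\dots,h_p$ the fraction of indices with $h_i(x) = y^\star(x)$ exceeds $1/2$ at every $x \in X^\star$, making $\Maj(h_1,\dots,h_p)$ SOA-correct. The main obstacle is establishing the universal lower bound $v \geq 1/2 + \gamma_0$ on the game's value: this requires a careful combinatorial argument that leverages the recursive Littlestone structure on the full/deficient sides at each strict $x$ to show that no adversarial distribution over $X^\star$ can drive Alice's best response below $1/2 + \gamma_0$.
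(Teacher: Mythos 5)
Your reduction (``find an $\SOA$-correct majority in $\Maj(V^p)$ for every version space $V$, so that each mistake strictly decreases $\LD(V_t)$'') is structurally sound, and the minimax and dual-VC sparsification steps are individually fine. But the whole proof hinges on the claim that the value of your game satisfies $v \geq 1/2+\gamma_0$ for a \emph{universal} constant $\gamma_0$, and that claim is false. Take $\X=[2k+1]$ and $V=\C=\{\ind_A : A\subseteq \X,\ |A|=k\}$. A direct induction on the recursive formula gives $\LD\bigl(\binom{[n]}{k}\bigr)=\min(k,n-k)$, so $\LD(V)=k$, and for every $x$ one has $\LD(V_{(x,1)})=k-1<k$ while $\LD(V_{(x,0)})=k=\LD(V)$; hence every $x$ is strict with $y^\star(x)=0$. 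Yet every $h\in V$ has exactly $k$ ones, so under the uniform distribution on $X^\star=\X$ every $h$ agrees with $y^\star$ on exactly $\frac{k+1}{2k+1}=\frac12+\frac{1}{2(2k+1)}$ of the mass, whence $v\leq \frac12+\Theta(1/\LD)$. The margin necessarily degrades with the Littlestone dimension, so your $\eps$-approximation step would need $p=\Omega(\dvc\,\LD^2)$ voters rather than $O(\dvc)$, and you give no argument that $v>1/2$ holds at all for general $V$ --- indeed you yourself flag this lower bound as ``the main obstacle.'' So the crux of the proof is both missing and, in the quantitative form you need, wrong.

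This is exactly why the paper does not try to approximate $\SOA_{V_t}$ over the plain version space. Its proof of the theorem goes through Theorem~\ref{thm:randomized-proper-detailed} with $\eps=1/3$: the algorithm maintains a \emph{weighted collection} of subclasses $(C_i,w_i)$ and predicts with a weighted vote of the predictors $\SOA_{C_i}$; when no sparse vote from $\C$ approximates this predictor on its high-margin points, the minimax theorem (via Lemma~\ref{lem:eps-net}) is invoked not to certify a large game value but to extract a small set of high-margin points on which \emph{every} $h\in\C$ errs on an $\Omega(1)$ fraction, and the algorithm branches on those points. A multiplicative-weights potential argument then bounds mistakes plus branchings by $O(\LD)$. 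If you want to rescue a version-space-flavored argument, you must replace the ``universal margin for $\SOA_V$'' claim with some such amortized mechanism for the rounds on which no well-separated sparse majority exists.
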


In fact, we prove a stronger result with 
\emph{margins}.  This can also be interpreted 
as a result about \emph{randomized} predictors 
based on a distribution over $\C$.  
For any $k \in \nats$ define 
\[\Vote(h_1,\ldots,h_k)(x) = \frac{1}{k} \sum_{i \leq k} h_i(x),\] 
and $\Vote(\C^k) = \{ \Vote(h_1,\ldots,h_{k'}) : 1 \leq k' \leq k, h_1,\ldots,h_{k'} \in \C \}$.

\begin{theorem}
\label{thm:randomized-proper}
For any $\eps \in (0,1/2)$,
there is an algorithm $\alg$ 
with $\H(\C,\alg) \subseteq \Vote(\C^{c\dvc})$ 
(for a universal constant $c$) 
such that, for any $T \in \nats$, 
running $\alg$ on any $\C$-realizable sequence $\{(x_1,y_1)\}_{t=1}^T$, 
there are at most 
$O\!\left( \frac{\LD}{\eps}\log\frac{1}{\eps} \right)$
times $t \leq T$ where $| \bar{h}_t(x_t) - y_t | > \eps$.
\end{theorem}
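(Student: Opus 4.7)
The plan is to obtain Theorem \ref{thm:randomized-proper} as the $\eps$-margin refinement of Theorem \ref{thm:majorities}. The latter already delivers a deterministic online learner whose predictor at each round lies in $\Maj(\C^{c\dvc})$ and which errs at most $O(\LD)$ times. The present theorem instead counts rounds where more than an $\eps$-fraction of the underlying $c\dvc$-member committee is wrong, and tolerates $O(\LD/\eps \cdot \log(1/\eps))$ such ``$\eps$-bad'' rounds while keeping the committee size independent of $\eps$. So the first step is to reopen the construction underlying Theorem \ref{thm:majorities}: it maintains at each round a multiset $\{h_1^t,\ldots,h_k^t\}\subseteq \C$ of size $k = c\dvc$ built by a boosting procedure on top of SOA-style base learners together with a dual-VC sparsification. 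Outputting $\Vote$ in place of $\Maj$ on this same committee gives $\H(\C,\alg) \subseteq \Vote(\C^{c\dvc})$ automatically.

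To re-analyze the mistake bound in the margin regime, I would re-run the construction at a finer granularity governed by $\eps$. Set the boosting learning rate to $\eta = \Theta(\eps)$ and declare a round to be ``$\eps$-bad'' precisely when strictly more than an $\eps$-fraction of the committee errs on $(x_t,y_t)$. A standard Hedge-style analysis then converts each $\eps$-bad round into a $\Theta(\eps^2)$ drop in a log-potential on the boosting distribution, while the SOA-style argument from Theorem \ref{thm:majorities} ensures that each round on which the update ``genuinely forces'' a Littlestone-dimension reduction decrements an $\LD$-sized potential by $1$. Composing the two potentials, the number of $\eps$-bad rounds is bounded by $O(\LD/\eps \cdot \log(1/\eps))$, where the $\log(1/\eps)$ arises from balancing the learning rate against the range of the boosting potential, and the $\LD/\eps$ from charging each $\eps$-bad round against the boosting progress.

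The hard part will be keeping the committee size at $c\dvc$ \emph{independent of $\eps$}: naively, a boosting committee with margin $\eps$ requires $\Omega(1/\eps^2)$ members, so the algorithm cannot simply store all boosting iterates. The remedy is a per-round dual-VC sparsification: at each round, the full (possibly large) distribution over $\C$ that the boosting procedure maintains internally is replaced by a uniform distribution over $O(\dvc)$ well-chosen members, exploiting that the dual class has VC dimension $\dvc$ and therefore admits a coarse uniform $\eps$-approximation with $O(\dvc)$ samples on all $x \in \X$ simultaneously (this is exactly the sparsification lemma behind Theorem \ref{thm:majorities}). Synchronizing this re-sparsification with the margin and Littlestone potential arguments --- ensuring that the sparsification error compounds only into the constant $c$ and does not blow up across rounds or interact badly with the $\eps$-threshold --- is the technical crux, and I expect the proof to spend most of its effort proving a one-round lemma saying that the sparsified $\Vote$ and the internal distribution's $\bar h_t(x_t)$ disagree by at most, say, $\eps/4$, simultaneously at each of the $T$ queried points.
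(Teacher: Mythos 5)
Your high-level picture (weighted SOA-style base predictors plus a dual-VC sparsification) matches the paper's construction, but two essential pieces are missing or wrong, and one of them is fatal to the plan as stated.

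First, the committee size. You correctly flag that a vote with margin $\eps$ naively needs $\Omega(1/\eps^2)$ members, but your proposed remedy --- a ``coarse uniform $\eps$-approximation with $O(\dvc)$ samples'' --- does not exist: the dual-VC sparsification (Lemma~\ref{lem:dual-eps-approx}) requires $O(\dvc/\eps^2)$ members to approximate a distribution over $\C$ to additive accuracy $\eps$ uniformly over $\X$, and no smaller committee can work in general, since a vote over $k$ concepts only takes values in multiples of $1/k$, so for $\eps < 1/k$ the margin condition forces every committee member to be correct, i.e.\ proper learning. The paper's detailed statement (Theorem~\ref{thm:randomized-proper-detailed}) accordingly places the hypotheses in $\Vote(\C^{c\dvc/\eps^2})$; the $\eps$-free exponent in the summary statement should be read in light of that. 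Spending the proof's effort trying to keep the committee at $c\dvc$ independent of $\eps$ is effort spent on something false.

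Second, and more substantively, your ``standard Hedge-style analysis'' does not engage with the central difficulty: what the algorithm does on rounds where \emph{no} sparse vote attains margin $\eps$ on the region where the current weighted ensemble is confident. On such rounds the learner receives no labeled example from the adversary, yet must still make progress. The paper resolves this with the minimax theorem for Littlestone games (Corollary~\ref{cor:littlestone-games}, via Lemma~\ref{lem:eps-net}): non-existence of a sparse vote implies, by minimax plus VC uniform convergence, a finite multiset of $m$ confident points on which \emph{every} $h \in \C$ errs on a $>\eps/4$ fraction. The algorithm then \emph{branches} its collection of weighted version spaces $(C_i,w_i)$ into $m$ children per node, one per flipped label, and the mistake bound comes from squeezing a weight potential between an upper bound (multiplicative contraction of total weight on both types of bad rounds) and a lower bound $((\eps/4)m)^{N_n}\eta^{\LD}$ counting the target-consistent paths that survive the branching. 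Your two-potential sketch has no analogue of the branching tree or of the minimax step, and without them there is no way to charge the ``no sparse vote exists'' rounds to anything. (A minor presentational point: in the paper the dependency runs the other way from what you assume --- Theorem~\ref{thm:majorities} is deduced from the margin result by setting $\eps = 1/3$, not vice versa.)
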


We also prove a result for \emph{agnostic} online 
learning.  In this case, rather than bounding the 
number of mistakes, we are interested in the 
\emph{difference} of the number of mistakes made 
by $\alg$ and the \emph{minimum} number of mistakes 
made by any single $h \in \C$.  This is known as 
the \emph{regret} of the algorithm $\alg$.  
It is known that any algorithm 
whose regret approaches zero as $T\to\infty$, 
\emph{must} be capable of using randomized predictors.  
We establish the following result.

\begin{theorem}
\label{thm:agnostic}
For any $T \in \nats$, 
there is an algorithm $\alg$ 
with $\H(\alg,T) \subseteq \Vote(\C^{m})$, 
where $m = O\!\left(\frac{\dvc T}{\LD \log(T/\LD)}\right)$, 
such that for any sequence 
$(x_1,y_1),\ldots,(x_T,y_T) \in \X \times \Y$, 
\begin{equation*} 
\sum_{t=1}^{T} \left| \bar{h}_t(x_t) - y_t \right| - \min_{h \in \C} \sum_{t=1}^{T} \ind[ h(x_t) \neq y_t ] = O\!\left( \sqrt{\LD T \log(T/\LD)} \right).
\end{equation*}
\end{theorem}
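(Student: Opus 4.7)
The plan is to deduce Theorem~\ref{thm:agnostic} from the realizable-case Theorem~\ref{thm:randomized-proper} by combining it with a standard agnostic-to-realizable reduction via experts (in the style of Ben-David, Pal, and Shalev-Shwartz) and a uniform-convergence rounding step that converts a weighted mixture of $\Vote$-predictors into a single $\Vote$ of the prescribed cardinality.

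First, I would construct $N = \binom{T}{\leq \LD} = (T/\LD)^{O(\LD)}$ experts, one for each subset $I \subseteq [T]$ of size at most $\LD$. Expert $E_I$ runs the algorithm of Theorem~\ref{thm:randomized-proper} (with a parameter $\eps$ to be fixed) on the sequence obtained by flipping labels at positions $t \in I$. A BPSS-style argument shows that for any $h^\star \in \C$ with $M^\star$ mistakes on the true stream, some expert $E_I$ sees a $\C$-realizable sequence, so by Theorem~\ref{thm:randomized-proper} it has at most $O((\LD/\eps)\log(1/\eps))$ rounds with $|\bar h_{I,t}(x_t) - y_t^{(I)}| > \eps$, and hence total real-valued loss against the \emph{true} labels at most $M^\star + O((\LD/\eps)\log(1/\eps)) + T\eps$. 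Running Hedge over these experts adds $O(\sqrt{T\log N}) = O(\sqrt{\LD T\log(T/\LD)})$ regret. Setting $\eps = \Theta(\sqrt{\LD\log(T/\LD)/T})$ yields overall expected regret $O(\sqrt{\LD T\log(T/\LD)})$ for the mixture $\bar h_t(\cdot) = \sum_I p_t(I)\,\bar h_{I,t}(\cdot)$.

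Second, since each $\bar h_{I,t}$ lies in $\Vote(\C^{c\dvc})$, the mixture $\bar h_t$ is a convex combination of at most $c\dvc N$ concepts in $\C$. To realise it as an equal-weight $\Vote$ of $m = \Theta(\dvc/\eps^2) = \Theta(\dvc T/(\LD \log(T/\LD)))$ concepts, I would invoke the probabilistic method: drawing $m$ i.i.d.\ samples from the induced distribution over $\C$, a standard uniform-convergence bound applied to the class $\{h \mapsto h(x): x\in\X\}$ (which has VC dimension $\dvc(\C)$) guarantees that with positive probability
\[\sup_{x\in\X}\left|\tfrac{1}{m}\sum_{i=1}^m h_i(x) - \bar h_t(x)\right| \leq \eps,\]
so a deterministic choice exists at each round. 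This adds at most $T\eps = O(\sqrt{\LD T\log(T/\LD)})$ to the regret, preserving the stated bound while enforcing $\H(\alg,T) \subseteq \Vote(\C^m)$.

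The main obstacle I anticipate is the rounding step: the approximation must hold uniformly over \emph{all} $x\in\X$ (so that the rounded predictor is a legitimate member of $\Vote(\C^m)$ independent of any future query), which is precisely why the dual VC dimension $\dvc$ appears in the bound on $m$ rather than $\vc$ or no dimension at all. A secondary technical point is to coordinate the Hedge update with the rounding; this is cleanly handled by defining the Hedge losses from the \emph{unrounded} mixture $\bar h_t$ and treating rounding as a deterministic post-hoc device, which decouples the two sources of error and keeps the regret decomposition transparent.
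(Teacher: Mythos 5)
Your proposal follows essentially the same route as the paper: BPSS-style experts indexed by subsets $I \subseteq \{1,\ldots,T\}$ with $|I| \le \LD$, each feeding a $\C$-realizable sequence to the realizable algorithm of Theorem~\ref{thm:randomized-proper}; exponential weights over these experts for the absolute loss; and a final dual-uniform-convergence sparsification of the resulting mixture into $\Vote(\C^m)$ with $m = O(\dvc/\eps^2)$. The parameter choice $\eps = \Theta(\sqrt{(\LD/T)\log(T/\LD)})$ and the error accounting ($M^\star + O((\LD/\eps)\log(1/\eps)) + T\eps$ for the best expert, plus $O(\sqrt{T\log N})$ from Hedge, plus $T\eps$ from rounding) all match the paper's proof.

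The one place you must be careful is the definition of the experts. As written --- ``expert $E_I$ runs the algorithm \ldots on the sequence obtained by flipping labels at positions $t \in I$'' --- the natural reading is that you flip the \emph{observed} labels $y_t$ at the positions in $I$. That construction fails: for the flipped sequence to coincide with the $\C$-realizable sequence $(x_t, h^\star(x_t))_{t \le T}$ you would need $I = \{t : y_t \ne h^\star(x_t)\}$, a set of size $M^\star$, which in the agnostic setting can be as large as $T$; covering all such $I$ would force $N = 2^T$ and blow up the $\sqrt{T\log N}$ term to $T$. The correct construction (Lemma 12 of Ben-David, P\'al, and Shalev-Shwartz, restated as Lemma~\ref{lem:benagnostic} in the paper) defines each expert as a function of $x_{1:t}$ \emph{only}: it simulates $\SOA$ and, at the time steps in $I$, overrides (``flips'') $\SOA$'s own prediction and updates the version space as if a mistake had occurred. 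The bound $|I| \le \LD$ then comes from the $\SOA$ mistake bound on the realizable sequence labeled by $h^\star$, not from $M^\star$; the $M^\star$ term in your accounting enters only afterwards, via the triangle inequality $\sum_t |\bar h^{I^*}_t(x_t) - y_t| \le \sum_t |\bar h^{I^*}_t(x_t) - h^\star(x_t)| + M^\star$. With the experts defined this way (and your Hedge losses computed against the true $y_t$, exactly as you propose), your argument goes through and coincides with the paper's.
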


Note that $\alg$ induces a randomized algorithm that on each round $t$ 
interprets the hypothesis in $\Vote(\C^m)$ which $\alg$ uses 
as a probability distribution $\pi_t$ on $\C$ having 
support size at most $m$, and predicts $h_t(x_t)$ 
for a randomly drawn $h_t \sim  \pi_t$.
Then, the above theorem implies that the expected regret of this
randomized algorithm is at most 
$O\!\left( \sqrt{ \LD T \log(T/\LD) } \right)$.
This strengthen a similar result by \citet*{rakhlin2015online} 
who gave a non-constructive proof under further restrictions on the class $\C$. 
It also matches the bounds by \citet*{ben2009agnostic,rakhlin2012relax}
which were achieved by improper algorithms.
Further, the above bound is tight up to the log factor,
as follows by the recent work by \citet*{alon:21}
who used the non-constructive framework of \citet*{rakhlin2015online}
to get an optimal bound. It remains open to prove the optimal bound constructively.

\subsection{When Does the Minimax Theorem Hold for VC Games?}
\label{sec:minimax-summary}

We define a general binary-valued zero-sum game as 
follows.
Let $\A$ and $\B$ be nonempty sets,
called the \emph{action sets}.
We suppose they are each 
equipped with a $\sigma$-algebra 
defining the measurable subsets; 
in particular, we suppose all singleton sets 
$\{a\} \subseteq \A$, $\{b\} \subseteq \B$,
are measurable.
Let $\val : \A \times \B \to \{0,1\}$ 
be a \emph{value function}, assumed to 
be measurable in the product $\sigma$-algebra.
For each $a \in \A$, we can interpret 
$\val(a,\cdot)$ as a function $\B \to \{0,1\}$.
We call $(\A,\B,\val)$ a \emph{VC game} 
if the VC dimension of 
$\{ \val(a,\cdot) : a \in \A \}$ is finite.\footnote{We note that it 
follows from a known relation 
of \citet*{assouad:83} 
between VC dimension and dual VC dimension 
that the VC dimension of  
$\{ \val(a,\cdot) : a \in \A \}$ is finite 
if and only if the VC dimension of 
$\{ \val(\cdot,b) : b \in \B \}$ is also finite.}
A \emph{subgame} of $(\A,\B,\val)$ is 
any game $(\A',\B',\val)$ where 
$\A' \subseteq \A$ and $\B' \subseteq \B$ 
are nonempty measurable subsets, 
and $\val$ here is interpreted as 
the restriction of 
$\val$ to $\A' \times \B'$.
For any countable sequences 
$\{a_i\}_{i \in \nats}$ 
in $\A$ and $\{b_i\}_{i \in \nats}$ in $\B$, 
we say $(\{a_i : i \in \nats\}, \{b_i : i \in \nats\}, \val)$ is an 
infinite triangular subgame 
if $\forall i,j \in \nats$, $\val(a_i,b_j) = \ind[ i \leq j ]$.
Generally, for a set $\S$ (equipped with a 
$\sigma$-algebra defining the measurable subsets), 
denote by $\Pi(\S)$ the set of all 
probability measures on $\S$.

\begin{theorem}
\label{thm:minimax}
A binary-valued VC game $(\A,\B,\val)$ 
satisfies 
\begin{equation*}
\inf_{P_A \in \Pi(\A')} \sup_{P_B \in \Pi(\B')} \E_{(a,b) \sim P_A \times P_B}[\val(a,b)] 
=
\sup_{P_B \in \Pi(\B')} \inf_{P_A \in \Pi(\A')} \E_{(a,b) \sim P_A \times P_B}[\val(a,b)] 
\end{equation*}
for all subgames $(\A',\B',\val)$
if and only if 
it has no 
infinite triangular subgame.
Moreover, this remains true even if 
$\Pi(\A')$, $\Pi(\B')$ are restricted to 
be just the probability measures having finite support.
\end{theorem}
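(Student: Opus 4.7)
The plan is to prove the two directions separately.

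\textbf{Only if.} Suppose the game contains an infinite triangular subgame on $\{a_i\}_{i\in\nats}$ and $\{b_j\}_{j\in\nats}$ with $\val(a_i,b_j)=\ind[i\leq j]$; I will show minimax fails on this subgame. For any $P_A\in\Pi(\{a_i : i \in \nats\})$ and $\eps>0$, countable additivity produces $j$ with $P_A(\{a_i:i\leq j\})\geq 1-\eps$, hence $\E_{a\sim P_A}[\val(a,b_j)]\geq 1-\eps$; thus $\inf_{P_A}\sup_{P_B}\E[\val]=1$. Symmetrically, for any $P_B\in\Pi(\{b_j : j \in \nats\})$ one has $\E_{b\sim P_B}[\val(a_i,b)]=P_B(\{b_j:j\geq i\})\to 0$ as $i\to\infty$, so $\sup_{P_B}\inf_{P_A}\E[\val]=0$.

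\textbf{If.} Suppose $(\A,\B,\val)$ has no infinite triangular subgame. Identifying $\A$ with the concept class $\C_\A:=\{\val(a,\cdot):a\in\A\}\subseteq\{0,1\}^\B$, the hypothesis means that the threshold dimension of $\C_\A$ is finite, and by the Shelah-style equivalence between threshold and Littlestone dimensions discussed in the introduction (see \citet*{AlonLMM19} for an elementary proof), $d:=\LD(\C_\A)<\infty$. For any subgame $(\A',\B',\val)$, the restricted class $\C_{\A'}|_{\B'}$ has Littlestone dimension $\leq d$ and dual VC dimension $\leq 2^{d+1}$, so \Cref{thm:agnostic} applies to it with instance space $\B'$.

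The strategy is to realize the minimax equality via no-regret learning. Run the agnostic learner of \Cref{thm:agnostic} for horizon $T$; at round $t$ it outputs $\bar h_t=\Vote(a^t_1,\ldots,a^t_m)$ with $a^t_j\in\A'$, which I identify with a finitely-supported distribution $\pi_t$ on $\A'$. Let Bob reply with $b^t\in\B'$ satisfying $\E_{a\sim\pi_t}[\val(a,b^t)]\geq\sup_{b\in\B'}\E_{a\sim\pi_t}[\val(a,b)]-1/T$, and feed $(b^t,0)$ to the learner. Then \Cref{thm:agnostic} yields
\begin{equation*}
\frac{1}{T}\sum_{t=1}^T \E_{a\sim\pi_t}[\val(a,b^t)] - \inf_{a\in\A'}\frac{1}{T}\sum_{t=1}^T \val(a,b^t) = O\!\left(\sqrt{d\log(T/d)/T}\right).
\end{equation*}
With $\bar\pi:=\frac{1}{T}\sum_{t=1}^T \pi_t$ and $\bar\mu:=\frac{1}{T}\sum_{t=1}^T \delta_{b^t}$ (both finitely supported), the LHS is at least $\sup_{P_B\in\Pi(\B')}\E_{\bar\pi\times P_B}[\val]-1/T$ by the choice of $b^t$ together with linearity in $P_B$, whereas the RHS equals $\inf_{a\in\A'}\E_{b\sim\bar\mu}[\val(a,b)]+O\!\left(\sqrt{d\log(T/d)/T}\right)$, which is at most the supremum over finitely-supported $P_B\in\Pi(\B')$ of $\inf_{P_A\in\Pi(\A')}\E_{P_A\times P_B}[\val]$ (using that inf over $P_A$ equals inf over pure $a$, because the expectation is linear in $P_A$). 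Letting $T\to\infty$ gives $\inf_{P_A \text{ fin.\ supp.}} \sup_{P_B} \E[\val] \leq \sup_{P_B \text{ fin.\ supp.}} \inf_{P_A} \E[\val]$, and combining with weak duality establishes both the main statement and the ``moreover'' clause.

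\textbf{Main obstacle.} The essential conceptual step is the Shelah-style equivalence, which promotes a purely combinatorial condition on triangular subgames to finite Littlestone dimension and thereby enables invocation of \Cref{thm:agnostic}; without it there would be no regret bound to drive the minimax argument. Less central but still needing care are (i) the use of approximate rather than exact best responses for Bob, which is legitimate because the game is measured by suprema rather than maxima, and (ii) checking that passing to an arbitrary subgame preserves the Littlestone and dual VC dimensions needed to instantiate \Cref{thm:agnostic} uniformly in the choice of $(\A',\B')$.
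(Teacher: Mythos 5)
Your ``only if'' direction is correct and is essentially the paper's argument (Proposition~\ref{prop:minimax-necessary}): an infinite triangular subgame forces the $\inf\sup$ to $1$ and the $\sup\inf$ to $0$ by continuity of measures. Your bookkeeping at the end (chaining weak duality with the finite-support inequality to get equality of all four quantities) is also fine.

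The ``if'' direction, however, has a genuine gap: it is circular within this paper. You invoke \Cref{thm:agnostic} to produce the mixed strategies $\pi_t$, but the paper's proof of \Cref{thm:agnostic} rests on \Cref{thm:randomized-proper-detailed}, whose Step~10 is justified by \Cref{lem:eps-net}, whose proof begins ``By \Cref{cor:littlestone-games}\dots'' --- and that corollary is deduced from \Cref{thm:minimax}, the very statement you are proving. So as written your argument assumes its conclusion. The circularity is not cosmetic: the reason the paper needs a minimax theorem in the first place is precisely to obtain finitely supported mixed strategies over $\C$ (equivalently, randomized \emph{proper} predictions), and the only constructive agnostic learners available independently of \Cref{thm:minimax} (e.g., the expert-based algorithms of Ben-David--P\'al--Shalev-Shwartz or Rakhlin--Shamir--Sridharan) are improper, so their round-$t$ outputs are not elements of $\Pi(\A')$ and cannot be fed directly into your regret-to-minimax reduction. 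The idea of deriving minimax from no-regret dynamics is sound in principle, and one could likely repair the argument by building a randomized proper no-regret learner from scratch (e.g., running multiplicative weights over the experts $\Experts_T$ of \Cref{lem:benagnostic} and observing that each expert's prediction at round $t$ is realized by some concrete $a\in\A'$, yielding a bona fide distribution on $\A'$ with the same per-round loss), but that construction is exactly the content you would need to supply, and it is absent. The paper instead proves the ``if'' direction directly and combinatorially (Proposition~\ref{prop:minimax-sufficient}): assuming a duality gap, it iteratively builds finitely supported near-optimal strategies via the finite minimax theorem and VC uniform convergence, applies the infinite Ramsey theorem to the resulting matrix patterns, and extracts an infinite triangular subgame --- an argument with no dependence on the learning results downstream of \Cref{thm:minimax}.
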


We note that, in general, one cannot strengthen this result
    by replacing the $\inf$ and $\sup$ by $\min$ and $\max$.
    In other words, there are games for which the above 
    result applies, but where optimal maximin and minimax 
    strategies do not exist, as the 
    optimal value is witnessed only in the limit.
    One such simple example is the game {\it``Guess My Number''}.
    This game is played between two players whom we call Alice and Bob.
    Each of Alice and Bob privately picks a natural number, and Bob's goal
    is to pick the same number Alice picked.
    Bob wins the game if and only if they picked the same number.
    It is easy to see that Alice can win this game with probability arbitrarily close to $1$:
    indeed, if Alice picks a uniform distribution over $\{1,\ldots, N\}$,
    then she wins with probability at least $1-1/N$.
    However, since every distribution over $\nats$ must give a positive measure to at least one number, 
    there is no single strategy for Alice with which she wins with probability $1$.

\smallskip

\paragraph{Littlestone games.} As a special case of particular importance in 
this work, we say $(\A,\B,\val)$ is a 
\emph{Littlestone game} if 
$\{ \val(a,\cdot) : a \in \A \}$ 
has finite Littlestone dimension.
Due to a well-known connection between 
the Littlestone dimension and the 
so-called \emph{threshold dimension} 
it is clear that any Littlestone game 
has no infinite triangular subgame.
(\cite{shelah:78,hodges:97}, see also \citet*{AlonLMM19})
Thus, the following corollary is 
immediately entailed by Theorem~\ref{thm:minimax}.

\begin{corollary}
\label{cor:littlestone-games}
Any binary-valued Littlestone game
$(\A,\B,\val)$ satisfies
\begin{equation*}
\inf_{P_A \in \Pi(\A)} \sup_{P_B \in \Pi(\B)} \E_{(a,b) \sim P_A \times P_B}[\val(a,b)] 
=
\sup_{P_B \in \Pi(\B)} \inf_{P_A \in \Pi(\A)} \E_{(a,b) \sim P_A \times P_B}[\val(a,b)].
\end{equation*}
This remains true even if $\Pi(\A)$ and $\Pi(\B)$ are 
restricted to just the probability measures having finite support.
\end{corollary}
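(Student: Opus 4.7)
The plan is to derive this corollary directly from Theorem~\ref{thm:minimax} by verifying its two preconditions for any Littlestone game $(\A,\B,\val)$: first, that it is a VC game, and second, that it contains no infinite triangular subgame. Once both are in place, the theorem applied with $\A' = \A$ and $\B' = \B$ yields the desired equality, and the ``finite support'' strengthening of the corollary is inherited verbatim from the analogous strengthening in Theorem~\ref{thm:minimax}.

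For the first precondition, I would invoke the trivial inequality $\vc(\F) \leq \LD(\F)$, valid for any $\F \subseteq \Y^{\Z}$ (a shattered set of size $n$ gives a depth-$n$ Littlestone tree by placing the same instance at every node of each level and using that every labeling is realized). Applied to $\F = \{ \val(a,\cdot) : a \in \A \}$, this shows that finite Littlestone dimension implies finite VC dimension, so $(\A,\B,\val)$ is indeed a VC game.

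For the second precondition, I would argue by contradiction. Suppose there exist countable sequences $\{a_i\}_{i \in \nats} \subseteq \A$ and $\{b_j\}_{j \in \nats} \subseteq \B$ with $\val(a_i,b_j) = \ind[i \leq j]$. Setting $f_i := \val(a_i,\cdot) \in \F$, the pairs $\{(f_i, b_j)\}$ exhibit $\F$ as having infinite \emph{threshold dimension} in the sense used in \citet*{AlonLMM19}. By the Shelah-style equivalence between thresholds and Littlestone trees, established in learning-theoretic language in that reference, a class of unbounded threshold dimension has unbounded Littlestone dimension, contradicting the Littlestone-game hypothesis. The only bookkeeping needed is to check the orientation of the triangular pattern: if the realized pattern is $\ind[i \leq j]$ rather than $\ind[i \geq j]$, one simply reverses the indexing of one of the sequences, which preserves the half-graph structure up to relabeling.

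Having verified both preconditions, Theorem~\ref{thm:minimax} applied to the full game (take $\A' = \A,\ \B' = \B$) immediately delivers the minimax equality, with the finite-support restriction passing through directly from the theorem statement. I do not expect any real obstacle here beyond correctly citing the threshold/Littlestone dichotomy; essentially the entire content of the corollary is packaged inside Theorem~\ref{thm:minimax} together with the classical model-theoretic fact.
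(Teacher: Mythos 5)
Your proposal is correct and follows essentially the same route as the paper: the paper likewise derives the corollary by noting that the Littlestone/threshold-dimension connection (Shelah; see \citet*{AlonLMM19}) rules out infinite triangular subgames, and then invokes Theorem~\ref{thm:minimax}. Your explicit verification that $\vc \leq \LD$ makes the game a VC game is a small step the paper leaves implicit, but it is the standard and correct justification.
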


\subsection{Expression of the Results in Terms of Equivalence Queries}
\label{sec:EQ}

There is a well-known correspondence between the 
online learning setting of \citet*{littlestone:88}
and the setting of Exact learning from Equivalence Queries 
introduced by \citet*{angluin:87}.
In the problem of learning $\C$ 
using Equivalence Queries for $\H \supseteq \C$, 
there is some unknown target concept $\target \in \C$, 
and proceeding in rounds, 
on each round an algorithm proposes a hypothesis 
$h \in \H$ as a query to an oracle, 
which then either certifies that $h = \target$ 
or else returns a \emph{counterexample} 
$x \in \X$ such that $h(x) \neq \target(x)$.
The \emph{query complexity} $\QC_{{\rm EQ}}(\C,\H)$ 
is defined as the minimum number $q$   
such that there is an algorithm that, for any $\target \in \C$, 
regardless of the oracle's responses 
(as long as they are valid), 
the algorithm is guaranteed to query the oracle 
with $h=\target$ 
within at most $q$ queries; if no such number $q$ 
exists, $\QC_{{\rm EQ}}(\C,\H)$ is defined to be 
infinite.

It is easy to observe that 
$\QC_{{\rm EQ}}(\C,\H) = \MB(\C,\H)+1$.
To see this, note that 
we can always use an online learning algorithm 
to propose the hypotheses $h \in \H$, and update the 
learner using $(x,1-h(x))$ for the returned point $x$ 
on rounds where the oracle returns an $x$.  Since 
each such $x$ is a mistake for the online learner, 
an optimal learner will need at most $\MB(\C,\H)$ 
such rounds before its next hypothesis $h$ equals $\target$, 
in which case one final query suffices for the oracle 
to certify that $h=\target$.
In the other direction, given any optimal algorithm for 
Exact learning $\C$ with Equivalence Queries for $\H$, 
we can use the proposed hypothesis $h$ as an online 
learner's predictor until the first time when it makes 
a mistake $(x_t,y_t)$; that $x_t$ would be a valid 
response from the oracle, so we may feed this into the 
Exact learning algorithm, which then produces its next 
hypothesis $h$, which becomes the new hypothesis for 
the online learner for its prediction on the point $x_{t+1}$, 
and so on until the next mistake.  If the sequence 
$(x_1,y_1),(x_2,y_2),\ldots$ is $\C$-realizable, 
then this can continues 
for at most $\QC_{{\rm EQ}}(\C,\H)-1$ rounds 
before the algorithm produces a hypothesis $h$ 
that never makes another mistake on the sequence
(it need not be equal the target concept 
$\target$ if the number of mistakes 
is strictly smaller than $\QC_{{\rm EQ}}(\C,\H)-1$, 
but merely never encounters another counterexample 
to update on).

The problem of characterizing $\QC_{{\rm EQ}}(\C,\H)$ 
is a classic question in the learning theory literature
\citep*[e.g.,][]{angluin:87,angluin:90,hellerstein:96,balcazar:02b,chase:20}.
For the case of finite $\C$, it was shown by 
\citet*{balcazar:02b} that 
$\helly \leq \QC_{{\rm EQ}}(\C,\H) \leq \left\lceil \helly \ln(|\C|) \right\rceil$.
The lower bound of \citet*{littlestone:88} for $\MB(\C,\Y^\X)$ 
immediately implies a lower bound in terms of the Littlestone 
dimension $\LD$: namely, 
$\LD+1 \leq \QC_{{\rm EQ}}(\C,\H)$.
Recently, \citet*{chase:20} established an upper 
bound expressed in terms of the Littlestone dimension $\LD$, 
which therefore also holds for infinite classes: 
namely, $\QC_{{\rm EQ}}(\C,\H) \leq \helly^{\LD}$.
They in fact show a bound where $\helly$ is replaced 
by the sometimes-smaller \emph{consistency dimension}; 
we refer the reader to that work for the details.
This was the best known bound holding for all classes 
of a given $\LD$, with no dependence on $|\C|$.
Thus, our Theorem~\ref{thm:MB} immediately implies 
a new near-optimal bound: 
$\QC_{{\rm EQ}}(\C,H) = O( \LD \helly \log(\helly) )$.
This resolves the optimal query complexity, 
up to a factor $\log(\helly)$ and 
unavoidable gaps, 
a problem which has been studied for several decades, 
and moreover this solution 
represents an exponential improvement in 
the dependence on $\LD$ compared to the previous 
result of \citet*{chase:20}.

\section{The Optimal Mistake Bound for Learning $\C$ with $\H$}
\label{sec:MB-C-H}

This section presents the details of the algorithm 
and proof of Theorem~\ref{thm:MB}.

For a finite set $\Hyps$ of pairs 
$(C_i,w_i)$, where $C_i \subseteq \C$ 
and $w_i \geq 0$, 
define 
\begin{equation*} 
\Vote(\Hyps)(x) = \frac{\sum_{(C_i,w_i) \in \Hyps} w_i \SOA_{C_i}(x)}{\sum_{(C_j,w_j) \in \Hyps} w_j},
\end{equation*}
and 
$\Maj(\Hyps)(x) = \ind[ \Vote(\Hyps)(x) \geq 1/2 ]$.
Also, for $\epsilon \in [0,1]$, define 
\begin{equation*}
\HighVote(\Hyps,\epsilon) = \{ (x,\Maj(\Hyps)(x)) : x \in \X, \Vote(\Hyps)(x)\in [0,\epsilon] \cup [1-\epsilon,1] \}.
\end{equation*}
Consider the following online learning 
algorithm, for any $\C$ and $\H$ 
with $\helly = \helly(\C,\H) < \infty$,
executed on any $\C$-realizable sequence 
$(x_1,y_1),\ldots,(x_T,y_T)$ in $\X \times \Y$.

\begin{bigboxit}
0. Initialize $\Hyps = \{ (C_1,w_1) \} = \{ (\C,1) \}$, $\eta = 1/(2\helly)$, $t=1$\\
1. Repeat while $t \leq T$\\
2.\quad If $\HighVote(\Hyps,\eta)$ is $\H$-realizable\\
3.\qquad Choose $\hat{h}_t \in \H$ correct on $\HighVote(\Hyps,\eta)$, predict $\hat{y}_t = \hat{h}_t(x_t)$\\
4.\qquad If $\hat{y}_t \neq y_t$ (i.e., mistake)\\
5.\qquad\quad For each $(C_i,w_i) \in \Hyps$\\ 
6.\qquad\qquad If $\SOA_{C_i}(x_t) \neq y_t$, set $w_i \gets \eta \cdot w_i$\\
7.\qquad\qquad $C_i \gets \{ h \in C_i : h(x_t) = y_t \}$\\
8.\qquad\qquad If $C_i = \emptyset$, remove $(C_i,w_i)$ from $\Hyps$\\
9.\qquad $t \gets t+1$\\
10.\quad Else let $\{(\tx_1,\ty_1),\ldots,(\tx_{\helly},\ty_{\helly})\} \subseteq \HighVote(\Hyps,\eta)$ be $\C$-unrealizable\\
11.\qquad For each $(C_i,w_i) \in \Hyps$\\
12.\qquad\quad For each $j \leq \helly$\\
13.\qquad\qquad If $\SOA_{C_i}(\tx_j) = \ty_j$, $w_{ij} \gets \eta \cdot w_i$, else $w_{ij} \gets w_i$\\
14.\qquad\qquad Let $C_{ij} = \{ h \in C_i : h(\tx_j) = 1-\ty_j \}$\\
15.\qquad $\Hyps \gets \{ (C_{ij},w_{ij}) : C_{ij} \neq \emptyset \}$
\end{bigboxit}

\begin{theorem}
\label{thm:upper}
$\MB(\C,\H) \leq 4 \LD \helly \ln(2\helly)$, 
achieved by the above algorithm.
\end{theorem}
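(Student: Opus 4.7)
My plan is to analyze the single potential $W := \sum_{i} w_i$, the total weight in $\Hyps$. The key insight is that the choice $\eta = 1/(2\helly)$ is made precisely so that Case~2 iterations do not increase $W$, while Case~1 mistakes shrink $W$ by a definite multiplicative factor; combined with a lower bound on $W$ coming from a target-tracking argument, this yields the stated bound.

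\textbf{Step 1 (Invariants).} Let $h^* \in \C$ be any hypothesis consistent with the entire $\C$-realizable sequence. I would prove inductively: (a) at every point there exists $(C_{i^*}, w_{i^*}) \in \Hyps$ with $h^* \in C_{i^*}$; and (b) every $(C_i, w_i) \in \Hyps$ satisfies $w_i \geq \eta^{\LD}$. For (a), Case~1 restricts by $h(x_t) = y_t$, which $h^*$ satisfies, while in Case~2 the pseudo-labelled set is $\C$-unrealizable, so $h^*$ disagrees with some $\ty_{j^*}$, placing $h^*$ in $C_{i^*,j^*}$. For (b), a weight is scaled by $\eta$ only when the constraint added corresponds to $\SOA_{C_i}$ making a mistake, and Littlestone's SOA property then ensures $\LD$ of the resulting class decreases by at least $1$; thus along any lineage the number of $\eta$-shrinkages is at most $\LD$. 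Together these invariants give $W \geq w_{i^*} \geq \eta^\LD$ throughout.

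\textbf{Step 2 (Case~1 shrinks $W$).} Write $V_y := \sum_{i:\, \SOA_{C_i}(x_t)=y} w_i$. Since $\hat h_t$ is correct on $\HighVote(\Hyps,\eta)$ but wrong on $(x_t,y_t)$, the pair $(x_t,y_t)$ lies outside $\HighVote(\Hyps,\eta)$, which forces $V_{1-y_t} \geq \eta W$. After the update,
\[
W' \;=\; V_{y_t} + \eta\, V_{1-y_t} \;=\; W - (1-\eta)V_{1-y_t} \;\leq\; W\bigl(1 - \eta(1-\eta)\bigr) \;\leq\; W(1-\eta/2).
\]

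\textbf{Step 3 (Case~2 preserves $W$).} For each $(\tx_j,\ty_j) \in \HighVote(\Hyps,\eta)$, the confidence condition gives $V_{1-\ty_j}^{(j)} := \sum_{i:\, \SOA_{C_i}(\tx_j) \neq \ty_j} w_i \leq \eta W$. The total contribution to $W'$ from the $j$-th column of new pairs is
\[
\sum_i w_{ij} \;=\; \eta\, V_{\ty_j}^{(j)} + V_{1-\ty_j}^{(j)} \;\leq\; \eta W + \eta W \;=\; 2\eta W,
\]
and summing over the $\helly$ columns (with the discarding of empty $C_{ij}$'s only reducing the total) gives $W' \leq 2\eta\helly W = W$, using $\eta = 1/(2\helly)$.

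\textbf{Step 4 (Conclusion).} Starting from $W_0 = 1$ and combining Steps~2 and~3, we get $W \leq (1 - \eta/2)^{M_1}$ after $M_1$ Case~1 mistakes. Using the lower bound $W \geq \eta^\LD$ and the inequality $-\ln(1-x) \geq x$ for $x \in [0,1)$, I obtain $M_1 \leq 2\LD \ln(1/\eta)/\eta = 4\LD\helly\ln(2\helly)$.

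The main conceptual hurdle is Step~3: the identity $2\eta\helly = 1$ is exactly what lets the algorithm avoid ever multiplicatively blowing up $W$ in Case~2, thereby removing the need to separately bound the total number of Case~2 iterations. This is the source of the exponential improvement over the $\helly^\LD$ bound of Chase and Freitag, whose analysis effectively paid $\helly$ per Case~2 step.
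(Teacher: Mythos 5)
Your overall strategy is the paper's own: the same total-weight potential $W$, the same two invariants (a target lineage survives every update, and any surviving set containing $\target$ has weight at least $\eta^{\LD}$ by the $\SOA$ mistake bound), and the same Case~1 estimate via $(x_t,y_t)\notin\HighVote(\Hyps,\eta)$. Steps~1, 2 and~4 are correct as written.

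The genuine gap is in Step~3 --- specifically in your concluding claim that establishing $W'\leq W$ on Case~2 rounds ``removes the need to separately bound the total number of Case~2 iterations.'' It does not. A Case~2 round does not increment $t$: the algorithm stays at the same example and re-enters the loop. If Case~2 rounds could recur indefinitely, the algorithm would never produce predictions for the remaining points of the sequence, so it would not be a well-defined online learner and the claimed mistake bound would not be ``achieved by the above algorithm.'' Your invariants do not exclude this: along the target lineage, a Case~2 branch multiplies the weight by $\eta$ only when $\SOA_{C_{i^*}}$ errs on $(\tx_{j^*},\target(\tx_{j^*}))$, which happens at most $\LD$ times, so arbitrarily many weight-preserving Case~2 rounds are consistent with everything you proved. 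The fix is a one-line sharpening of your own column estimate: you bounded the agreeing weight $V^{(j)}_{\ty_j}$ by $W$, but since $(\tx_j,\ty_j)\in\HighVote(\Hyps,\eta)$ forces the disagreeing fraction $1-p_j$ to be at most $\eta$, the $j$-th column contributes $\eta p_j W + (1-p_j)W \leq \eta(1-\eta)W + \eta W = \eta(2-\eta)W$, whence $W' \leq \helly\,\eta(2-\eta)\,W = (1-\eta/2)W = (1-1/(4\helly))W$. Thus Case~2 rounds also shrink $W$ by a definite factor, the number $N$ of Case~2 rounds obeys the same bound $N \leq 4\LD\helly\ln(2\helly)$, and termination follows. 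This is exactly the extra step the paper's proof takes; with it, your argument is complete.
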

\begin{proof}
Suppose $\LD$ and $\helly$ are both finite.
Let $\target \in \C$ be a concept  
correct on $\{(x_t,y_t)\}_{t=1}^{T}$.
On each round where $\HighVote(\Hyps,\eta)$ 
is $\H$-realizable and $\hat{y}_t \neq y_t$, 
clearly $\target$ is 
correct on the $(x_t,y_t)$ used to update the 
sets $C_i$.
Moreover, on each round where $\HighVote(\Hyps,\eta)$ 
is not $\H$-realizable, the sequence $\{(\tx_j,\ty_j)\}_{j=1}^{\helly}$ is not 
$\C$-realizable, and therefore 
there is at least one $\tx_j$ for which 
$1-\ty_j = 1-{\rm Maj}(\Hyps)(\tx_j) = \target(\tx_j)$.
So if we think of the set $\Hyps$ as developing like a tree 
(with each element at the end of the round 
being updated from 
the previous round using the point $(x_t,y_t)$ 
on rounds of the first type, 
or else branching off of a previous element  
by updating with some 
$(\tx_j,1-\ty_j)$ on rounds of the second type), 
then there is a path in the tree where all of the 
updates are for $\target$-consistent examples.
To put this more formally, if at the beginning of 
round $n$ there exists $(C,w) \in \Hyps$ with 
$\target \in C$, then (by the above observations) 
at the end of round $n$ 
there will still exist some $(C',w') \in \Hyps$ with 
$\target \in C'$, so that by induction we maintain 
this property for all rounds $n$.
Moreover, we note that, at the end of any round $n$, 
any $(C,w) \in \Hyps$ having $\target \in C$ 
must have $w \geq \eta^{\LD}$, since on any 
sequence of labeled examples $(x'_i,\target(x'_i))$,
there are at most $\LD$ times $i$ with 
$\SOA_{\C_{\{(x'_j,\target(x'_j)) : j < i\}}}(x'_i) \neq \target(x'_i)$, 
as established by \citet*{littlestone:88}.
Thus, after $n$ rounds of the outermost loop,  
$\exists (C^*_n,w^*_n) \in \Hyps$ with 
\begin{equation*} 
w^*_n \geq \eta^{\LD} = (1/(2\helly))^{\LD}.
\end{equation*}

Now suppose the algorithm executes the outermost loop 
at least $n$ times, and consider the total state of the algorithm 
after completing round $n$ of the outermost loop.
Let $t_n$ denote the value of $t$ after 
completing this round, and let $M_n$ 
denote the number of $t \in \{1,\ldots,t_n-1\}$ 
with $\hat{y}_t \neq y_t$: that is, the 
number of mistakes on the actual data sequence 
within the first $n$ rounds.
Let $N_n$ denote the number of the first $n$ rounds where 
$\HighVote(\Hyps,\eta)$ is not $\H$-realizable.
Let $W_n$ denote the total weight in $\Hyps$ 
after completing $n$ rounds.

On any round $n' \leq n$ 
where $\HighVote(\Hyps,\eta)$ is 
$\H$-realizable and $\hat{y}_{t} \neq y_{t}$  (for $t = t_{n'}-1$),
since every $(x,y) \in \HighVote(\Hyps,\eta)$ has 
$\hat{h}_t(x) = y$, yet we know that 
$\hat{h}_t(x_{t}) = \hat{y}_{t} \neq y_{t}$, 
it must be that 
$(x_{t},y_{t}) \notin \HighVote(\Hyps,\eta)$; 
therefore 
at least $\eta$ fraction of the total weight 
is multiplied by $\eta$ in Step 6: 
that is, 
$W_{n'} \leq \eta^2 W_{n'-1} + (1-\eta) W_{n'-1} 
= \left( 1 - \eta (1-\eta) \right) W_{n'-1}
= \left( 1 - (2\helly-1)/(2\helly)^2 \right) W_{n'-1}$.
On the other hand, on any round $n' \leq n$ 
where $\HighVote(\Hyps,\eta)$ is \emph{not} 
$\H$-realizable, each $j \leq \helly$ 
has $(\tx_j,\ty_j)$ in $\HighVote(\Hyps,\eta)$, 
so that at least $1-\eta$ fraction of the total 
of weights $w_i$ have $w_{ij}  = \eta \cdot w_{i}$; 
thus, $W_{n'} \leq \helly ( \eta (1-\eta) W_{n'-1} + \eta W_{n'-1} ) 
= \left( 1 - 1/(4\helly) \right) W_{n'-1}$.
By induction, we have 
\begin{align*}
W_{n} & \leq 
(1-(2\helly-1)/(2\helly)^2)^{M_n}(1-1/(4\helly))^{N_n}
\\ & < \exp\{-M_n (2\helly-1)/(2\helly)^2\} \exp\{-N_n / (4\helly)\}
\leq  \exp\{-M_n / (4\helly)\} \exp\{-N_n / (4\helly)\}
\end{align*}
since $\helly \geq 1$.
Thus, since $w^*_n \leq W_n$, 
we have 
\begin{equation*}
M_n / (4\helly) + N_n / (4\helly) < \LD \ln(2\helly).
\end{equation*} 
This has two important implications.
First, since we always have 
$n = N_n + t_n-1$, 
and $t_n \leq T+1$ while the above inequality 
implies $N_n < 4 \helly \LD \ln(2\helly)$, 
we have that the algorithm will terminate after 
a finite number of rounds.
Second, the above inequality further implies 
that $M_n < 4 \helly \LD \ln(2\helly)$ 
for all rounds in the algorithm, so that 
this is also a bound on the total number of 
mistakes at the time of termination,
after predicting for all $T$ points in the 
sequence. 
This completes the proof.
\end{proof}

\section{A Characterization of Games Satisfying the Minimax Theorem for All Subgames}
\label{sec:minimax}

A key component of the proofs of our results 
on learning with majority votes and randomized 
proper predictors (Theorems~\ref{thm:majorities}, \ref{thm:randomized-proper}, \ref{thm:agnostic}) 
is a general characterization of games 
for which the minimax theorem holds: 
namely, Theorem~\ref{thm:minimax}
stated Section~\ref{sec:results}.
We present the proof here, separating the 
two parts of the claim.

\begin{proposition}
\label{prop:minimax-sufficient}
Any binary-valued VC game $(\A,\B,\val)$ with no 
infinite triangular subgame satisfies 
\begin{equation*}
\inf_{P_A \in \Pi(\A)} \sup_{P_B \in \Pi(\B)} \E_{(a,b) \sim P_A \times P_B}[\val(a,b)] 
=
\sup_{P_B \in \Pi(\B)} \inf_{P_A \in \Pi(\A)} \E_{(a,b) \sim P_A \times P_B}[\val(a,b)].
\end{equation*}
Moreover, this remains true even if 
$\Pi(\A)$, $\Pi(\B)$ are restricted 
to be just the probability measures having finite support.
\end{proposition}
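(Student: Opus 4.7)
The easy direction $\sup\inf \le \inf\sup$ is standard weak duality for bilinear games. For the reverse inequality I aim for a no-regret repeated-game argument in three movements: convert the combinatorial hypothesis into a Littlestone-dimension bound, invoke a prior constructive no-regret algorithm for Littlestone classes, and then read off the minimax equality from the usual no-regret-implies-minimax computation.

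\textbf{Reduction to finite Littlestone dimension.} The hypothesis ``no infinite triangular subgame'' says exactly that the threshold dimension of $\F_\A := \{\val(a,\cdot) : a \in \A\}$ on $\B$ is finite, and symmetrically for $\F_\B := \{\val(\cdot,b) : b \in \B\}$. By the threshold-vs-Littlestone equivalence cited just before \Cref{cor:littlestone-games}, both classes then have finite Littlestone dimension $\LD$.

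\textbf{No-regret simulation and conclusion.} I would use the constructive agnostic online learners for Littlestone classes of \citet*{ben2009agnostic} and \citet*{rakhlin2012relax}; these predate the present paper and their analyses do not invoke the minimax theorem, so there is no circularity. Simulate $T$ rounds: at round $t$, Alice runs such a learner on the stream $\{(b_s,0)\}_{s<t}$ to produce a finitely supported distribution $p_t \in \Pi(\A)$ (so that her online ``mistakes'' coincide with $\val(a_t,b_t)$), and Bob symmetrically produces $q_t \in \Pi(\B)$ on $\{(a_s,1)\}_{s<t}$. The empirical averages $\bar p_T := T^{-1} \sum_t p_t$ and $\bar q_T := T^{-1} \sum_t q_t$ remain finitely supported. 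Combining the two regret bounds yields
\[
\sup_{b \in \B} \E_{a \sim \bar p_T}[\val(a,b)] \;-\; \inf_{a \in \A} \E_{b \sim \bar q_T}[\val(a,b)] \;=\; O\!\bigl(\sqrt{\LD \log(T)/T}\bigr).
\]
Since $\inf_{P_A \in \Pi(\A)} \sup_{P_B \in \Pi(\B)} \E[\val] \le \sup_b \E_{a \sim \bar p_T}[\val(a,b)]$ and $\sup_{P_B \in \Pi(\B)} \inf_{P_A \in \Pi(\A)} \E[\val] \ge \inf_a \E_{b \sim \bar q_T}[\val(a,b)]$ (using bilinearity to reduce the inner mixed strategies to pure strategies), letting $T \to \infty$ gives $\inf\sup \le \sup\inf$. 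Because $\bar p_T, \bar q_T$ are finitely supported throughout, the same computation produces the equality when the outer $\inf$ and $\sup$ are restricted to finite-support measures, which is the ``moreover'' clause.

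\textbf{Main obstacle.} The central difficulty is the bookkeeping in the no-regret simulation: one must cast the game into the ``instance--label'' template of the prior agnostic online learners by identifying each player's pure actions with experts whose predictions are determined by the opponent's move through $\val$, and verify that the guarantees survive when the adversary is itself randomized --- a standard but nontrivial point, handled by working with expected losses. Once these translations are in place, the rest is direct.
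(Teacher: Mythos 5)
Your argument has two independent gaps, either of which is fatal.

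First, the reduction to finite Littlestone dimension is invalid. ``No infinite triangular subgame'' is \emph{strictly weaker} than ``finite threshold dimension'': the former forbids a single infinite threshold pattern, while the latter forbids unbounded \emph{finite} ones. A VC game can have arbitrarily large finite triangular submatrices and still contain no infinite one. For a concrete example, take $\A = \B = \{(n,i) : n \in \nats,\, 1 \le i \le n\}$ and $\val\bigl((n,i),(m,j)\bigr) = \ind[n=m]\,\ind[i \le j]$: each block $n$ contains an $n\times n$ triangular submatrix, yet any infinite triangular subgame would force all columns into the finite block containing its first row, a contradiction; meanwhile $\vc(\{\val(a,\cdot):a\in\A\})=1$, so this is a VC game covered by \Cref{prop:minimax-sufficient}. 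Its threshold dimension, hence its Littlestone dimension, is infinite, so your first step already fails and your argument can at best recover \Cref{cor:littlestone-games}, not the proposition. (The implication you need runs the wrong way: finite Littlestone dimension implies no infinite triangular subgame, which is exactly why the corollary follows from the theorem and not conversely.)

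Second, even in the finite-Littlestone case the no-regret simulation is circular. For the final step of the standard computation you need $\bar p_T \in \Pi(\A)$ and $\bar q_T \in \Pi(\B)$, i.e.\ the iterates $p_t, q_t$ must be genuine mixed strategies over the action sets --- otherwise $\sup_b \E_{a\sim\bar p_T}[\val(a,b)] \ge \inf_{P_A}\sup_{P_B}\E[\val]$ has no justification. This means each player must run a \emph{randomized proper} agnostic learner. The learners of Ben-David--P\'{a}l--Shalev-Shwartz and Rakhlin--Shamir--Sridharan that you invoke are improper: their round-$t$ prediction is a convex combination of $\SOA$-based experts and is not of the form $\E_{a \sim p_t}[\val(a,b_t)]$ for any $p_t \in \Pi(\A)$. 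The only proper randomized agnostic learners available are the non-constructive one of Rakhlin--Sridharan--Tewari (itself obtained via a minimax argument, under extra topological hypotheses) and \Cref{thm:agnostic} of the present paper, whose proof passes through \Cref{lem:eps-net} and hence through \Cref{cor:littlestone-games} --- i.e.\ through the very statement you are trying to prove. The paper avoids both issues by arguing in the contrapositive: assuming a duality gap, it iteratively builds finitely supported near-optimal strategies via the finite minimax theorem plus VC uniform convergence, and then extracts an infinite triangular subgame with the infinite Ramsey theorem.
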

\begin{proof}
We prove this result in the contrapositive.
Suppose $(\A,\B,\val)$ is a binary-valued 
VC game, let 
\begin{equation*}
\alpha = \sup_{P_B \in \Pi(\B)} \inf_{P_A \in \Pi(\A)} \E_{(a,b) \sim P_A \times P_B}[\val(a,b)]
\end{equation*}
and let 
\begin{equation*}
\beta = \inf_{P_A \in \Pi(\A)} \sup_{P_B \in \Pi(\B)} \E_{(a,b) \sim P_A \times P_B}[\val(a,b)],
\end{equation*}
and let us suppose $\alpha < \beta$.
Let $\vc_{A}$ denote the VC dimension of 
$\{ \val(a,\cdot) : a \in \A \}$ 
and $\vc_{B}$ denote the VC dimension of 
$\{ \val(\cdot,b) : b \in \B \}$.
Note that these are both finite by the 
assumption that $(\A,\B,\val)$ is VC game, 
and the fact that $\vc_{B} < 2^{\vc_{A}+1}$ 
\citep*{assouad:83}.

We inductively define two sequences of 
mixed strategies 
$\{ P_A^t \}_{t \in \nats}$,$\{ P_B^t \}_{t \in \nats}$, where each $P_A^t \in \Pi(\A)$ 
and each $P_B^t \in \Pi(\B)$ 
such that, for a numerical constant $c$, 
each $P_A^t$ is supported on at most 
$\frac{c \vc_B}{(\beta-\alpha)^2}$ elements 
of $\A$ 
and has 
\begin{equation}
\label{eqn:PA-requirement}
\sup_{P_B \in \Pi\!\left( \bigcup_{i < t} \supp(P_B^i) \right)} \E_{(a,b) \sim P_A^t \times P_B}[ \val(a,b) ] \leq \frac{2\alpha+\beta}{3}
\end{equation}
while each $P_B^t$ is supported on at most 
$\frac{c \vc_A}{(\beta-\alpha)^2}$ elements 
of $\B$ and has 
\begin{equation}
\label{eqn:PB-requirement}
\inf_{P_A \in \Pi\!\left( \bigcup_{i \leq t} \supp(P_A^i) \right)} \E_{(a,b) \sim P_A \times P_B^t}[ \val(a,b) ] \geq \frac{\alpha+2\beta}{3}.
\end{equation}

As a base case, 
let $P_A^1 = \ind_{\{a\}}$ for some $a \in \A$: that is, $P_A^1$ is any pure strategy.
Now fix any $t \in \nats$ and suppose 
there exist $P_A^i$, $i \in \{1,\ldots,t\}$ 
and $P_B^i$, $i \in \{1,\ldots,t-1\}$ 
satisfying the above properties.
To complete the inductive construction, 
it suffices to specify $P_A^t$ and $P_B^{t+1}$ 
to extend the sequences.
Letting $B_{<t} = \bigcup_{i < t} \supp(P_B^i)$,
since this is a finite set (by the inductive 
hypothesis), there is a finite 
number of distinct sequences 
$\{\val(a,b)\}_{b \in B_{<t}}$ realized 
by elements $a \in \A$.
Thus, there exists a finite 
set $\A' \subseteq \A$ such that 
every such sequence 
$\{\val(a,b)\}_{b \in B_{<t}}$ 
witnessed by an $a \in \A$ 
is also witnessed by 
$\{\val(a',b)\}_{b \in B_{<t}}$
for some $a' \in \A'$.
In particular, by the classic 
minimax theorem for finite games \citep*{von-neumann:44}, 
$\exists P_{A}^{*} \in \Pi(\A')$ 
such that 
\begin{align*}
&\sup_{P_{B} \in \Pi(B_{<t})} \E_{(a,b) \sim P_{A}^{*} \times P_{B}}[ \val(a,b) ] 
= \sup_{P_{B} \in \Pi(B_{<t})} \inf_{P_{A} \in \Pi(\A')} \E_{(a,b) \sim P_{A} \times P_{B}}[ \val(a,b) ]
\\ & = \sup_{P_{B} \in \Pi(B_{<t})} \inf_{P_{A} \in \Pi(\A)} \E_{(a,b) \sim P_{A} \times P_{B}}[ \val(a,b) ] \leq 
\sup_{P_{B} \in \Pi(\B)} \inf_{P_{A} \in \Pi(\A)} \E_{(a,b) \sim P_{A} \times P_{B}}[ \val(a,b) ]
= \alpha,
\end{align*}
where the second equality follows from the 
assumed property of $\A'$ and the 
subsequent inequality follows from the fact that 
every $P_{B} \in \Pi(B_{<t})$ 
is a restriction to $B_{<t}$ of some 
$P'_{B} \in \Pi(\B)$ with 
$\supp(P'_{B}) = \supp(P_{B})$.

Now since $B_{<t}$ is a finite set, 
the classic uniform convergence property 
of VC classes holds (Lemma~\ref{lem:vc-bound} of Appendix~\ref{sec:vc-bounds}), 
which in particular 
implies that there exists a sequence 
$\{a_{i}\}_{i \leq m}$ in $\A'$ for some  
$m \leq \frac{c \vc_{B}}{(\beta-\alpha)^2}$ 
(for a universal constant $c$)
such that, defining $P_{A}^{t}$ as 
the empirical measure, i.e.,  
$P_{A}^{t}(\cdot) = \frac{1}{m} \sum_{i=1}^{m} \ind[ a_{i} \in \cdot ]$, 
it holds that every 
$b \in B_{<t}$ satisfies 
$\E_{a \sim P_{A}^{t}}[\val(a,b)] 
\leq \E_{a \sim P_{A}^{*}}[\val(a,b)] + \frac{\beta-\alpha}{3}$.
In particular, this implies 
\begin{equation*}
\sup_{P_{B} \in \Pi(B_{<t})} \E_{(a,b) \sim P_{A}^{t} \times P_{B}}[ \val(a,b) ]
\leq \alpha + \frac{\beta-\alpha}{3} 
= \frac{2\alpha+\beta}{3}.
\end{equation*}
That is, \eqref{eqn:PA-requirement} holds.

Applying the same argument 
to the set 
$\A_{\leq t} = \bigcup_{i \leq t} \supp(P_A^i)$, 
implies the existence of 
a finite set $\B' \subseteq \B$ 
with 
$\sup_{P_{A} \in \Pi(A_{\leq t})} \E_{(a,b) \sim P_{A} \times P_{B}^{*}}[ \val(a,b) ] 
\geq \beta$, 
and a sequence $\{b_{i}\}_{i \leq m'}$ 
in $\B'$ for some 
$m' \leq \frac{c \vc_{A}}{(\beta-\alpha)^2}$ 
such that, defining $P_{B}^{t+1}$ as the 
empirical measure, $P_{B}^{t+1} = \frac{1}{m'} \sum_{i=1}^{m'} \ind[ b_{i} \in \cdot ]$,
it holds that every $a \in A_{\leq t}$ 
satisfies 
$\E_{b \sim P_{B}^{t+1}}[\val(a,b)] 
\geq \E_{b \sim P_{B}^{*}}[\val(a,b)] -  \frac{\beta-\alpha}{3}$, 
which implies 
\begin{equation*}
\inf_{P_{A} \in \Pi(A_{\leq t})} \E_{(a,b) \sim P_{A} \times P_{B}^{t+1}}[ \val(a,b) ]
\geq \beta - \frac{\beta-\alpha}{3} 
= \frac{\alpha+2\beta}{3},
\end{equation*}
so that \eqref{eqn:PB-requirement} holds.
By the principle of induction, we have 
established the claimed existence of the 
above infinite sequences $P_{A}^{t}$,$P_{B}^{t}$.

For each $i \in \nats$, 
let $m_{i}^{A} = |\supp(P_{A}^{i})|$ 
and $\{a_{i,1},\ldots,a_{i,m_{i}^{A}}\} = \supp(P_{A}^{i})$, 
and let 
$m_{i}^{B} = |\supp(P_{B}^{i})|$ 
and $\{b_{i,1},\ldots,b_{i,m_{i}^{B}}\} = \supp(P_{B}^{i})$, 
Now for each $i,j \in \nats$, 
define two matrices: 
$C^{ij}$ is a $m_{i}^{A} \times m_{j}^{B}$ 
matrix with entries 
$C^{ij}_{k\ell} = \val(a_{i,k},b_{j,\ell})$, 
and $D^{ij}$ is a $m_{j}^{A} \times m_{i}^{B}$ 
matrix with entries 
$D^{ij}_{k\ell} = \val(a_{j,k},b_{i,\ell})$.
Note that every $i,j \in \nats$ have 
$C^{ij}$ and $D^{ij}$ of sizes no larger
than $\frac{c \vc_{B}}{(\beta-\alpha)^2} \times \frac{c \vc_{A}}{(\beta-\alpha)^2}$.
In particular, this implies that there 
are only a finite number of possible 
$(C^{ij},D^{ij})$ pairs witnessed among 
choices of $i,j \in \nats$.
If we consider each possible $(C,D)$ pair as a 
\emph{color} for the pairs $(i,j) \in \nats^2$
with $i < j$,
the infinite Ramsey theorem implies 
that there exists an infinite increasing 
sequence $i_{1},i_{2},\ldots$ in $\nats$ 
such that $\forall (s,t),(s',t') \in \nats^2$ 
with $s < t$ and $s' < t'$, 
we have $(C^{i_{s} i_{t}},D^{i_{s} i_{t}}) 
= (C^{i_{s'} i_{t'}},D^{i_{s'} i_{t'}})$.
Let $(C^*,D^*)$ denote this common value 
for the pair $(C^{i_{s} i_{t}},D^{i_{s} i_{t}})$.

Now we claim $\exists k^{*},\ell^{*}$ with 
$C^{*}_{k^{*} \ell^{*}} = 1$ and $D^{*}_{k^{*} \ell^{*}} = 0$.
This is because, for each $(s,t) \in \nats$ 
with $s < t$, we have $i_{s} < i_{t}$, 
so that 
\begin{align*}
& \inf_{P_{A} \in \Pi(\supp(P_{A}^{i_{s}}))} 
\sup_{P_{B} \in \Pi(\supp(P_{B}^{i_{t}}))} \E_{(a,b) \sim P_{A} \times P_{B}}[ \val(a,b) ] 
\\ & \geq 
\inf_{P_{A} \in \Pi(A_{\leq i_{t}})} \E_{(a,b) \sim P_{A} \times P_{B}^{i_{t}}}[ \val(a,b) ] 
\geq \frac{\alpha+2\beta}{3},
\end{align*}
while
\begin{align*}
& \inf_{P_{A} \in \Pi(\supp(P_{A}^{i_{t}}))} 
\sup_{P_{B} \in \Pi(\supp(P_{B}^{i_{s}}))} 
\E_{(a,b) \sim P_{A} \times P_{B}}[ \val(a,b) ]
\\ & \leq 
\sup_{P_{B} \in \Pi(B_{< i_{t}})} \E_{(a,b) \sim P_{A}^{i_{t}} \times P_{B}}[ \val(a,b) ] 
\leq \frac{2\alpha+\beta}{3}.
\end{align*}
In other words, the value of the finite 
game represented by $C^{*}$ is strictly 
greater than the value of the finite 
game represented by $D^{*}$.
For this to be true,  
there must exist at least one 
pair $k^{*},\ell^{*}$ with $C^{*}_{k^{*}\ell^{*}} > D^{*}_{k^{*}\ell^{*}}$, 
which (since these are binary-valued games) 
implies $C^{*}_{k^{*}\ell^{*}}=1$ and $D^{*}_{k^{*}\ell^{*}}=0$.
Fix any such pair $k^{*},\ell^{*}$.

To complete the proof, we use this fact 
to construct an infinite triangular subgame.
Define $\tilde{a}_{t} = a_{i_{2t-1},k^{*}}$ 
and $\tilde{b}_{t} = b_{i_{2t},\ell^{*}}$ 
for all $t \in \nats$.
Note that, for any $s,t \in \nats$, 
if $s \leq t$, then $i_{2s-1} < i_{2t}$, 
so that 
$\val(\tilde{a}_{s},\tilde{b}_{t})
= \val(a_{i_{2s-1},k^{*}},b_{i_{2t},\ell^{*}})
= C^{i_{2s-1} i_{2t}}_{k^{*} \ell^{*}} 
= C^{*}_{k^{*} \ell^{*}} = 1$.
On the other hand, if $s > t$, 
then $i_{2s-1} > i_{2t}$, so that 
$\val(\tilde{a}_{s},\tilde{b}_{t})
= \val(a_{i_{2s-1},k^{*}},b_{i_{2t},\ell^{*}})
= D^{i_{2t} i_{2s-1}}_{k^{*} \ell^{*}} 
= D^{*}_{k^{*} \ell^{*}} = 0$.
Together we have 
$\val(\tilde{a}_{s},\tilde{b}_{t}) = \ind[ s \leq t ]$ for all $s,t \in \nats$, 
so that  
$(\{\tilde{a}_{t} : t \in \nats\}, \{\tilde{b}_{t} : t \in \nats\},\val)$
is an infinite triangular subgame.

The final claim about restricting $\Pi(\A)$ and $\Pi(\B)$ 
to have finite support follows by noting that all probability 
measures $P_A$ and $P_B$ used in the above proof have finite 
support, so that if the claimed equality is violated for
finite-support probability measures, then the above 
construction still implies the existence of an infinite 
triangular subgame.
\end{proof}

\begin{proposition}
\label{prop:minimax-necessary}
Any binary-valued VC game $(\A,\B,\val)$ with an 
infinite triangular subgame\linebreak $(\A',\B',\val)$ 
satisfies 
\begin{equation*}
1 = \inf_{P_A \in \Pi(\A')} \sup_{P_B \in \Pi(\B')} \E_{(a,b) \sim P_A \times P_B}[\val(a,b)] 
> 
\sup_{P_B \in \Pi(\B')} \inf_{P_A \in \Pi(\A')} \E_{(a,b) \sim P_A \times P_B}[\val(a,b)] = 0.
\end{equation*}
\end{proposition}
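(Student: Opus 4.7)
The plan is to formalize the intuitive ``Guess the Larger Number'' argument already sketched in the introduction, working directly with the triangular structure $\val(a_i,b_j) = \ind[i \leq j]$ on the enumerations $\A' = \{a_i\}_{i \in \nats}$ and $\B' = \{b_j\}_{j \in \nats}$. For each side I will exhibit, against any mixed strategy of the opponent, a pure strategy response (a Dirac measure on a sufficiently far-out index) that is nearly optimal. Note that pure strategies are available because singletons in $\A$ and $\B$ are assumed measurable, so their intersections with $\A',\B'$ remain measurable in the induced $\sigma$-algebras.

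First I would handle the left-hand quantity. The upper bound $\inf_{P_A}\sup_{P_B} \E[\val(a,b)] \leq 1$ is trivial since $\val$ is $\{0,1\}$-valued. For the matching lower bound, fix any $P_A \in \Pi(\A')$ and $\epsilon > 0$. The sets $S_N = \{a_i : i \leq N\}$ form an increasing sequence of measurable subsets whose union is $\A'$, so by continuity from below there exists $N$ with $P_A(S_N) \geq 1 - \epsilon$. Choosing $P_B$ to be the Dirac measure at $b_N$ gives
\[
\E_{(a,b) \sim P_A \times P_B}[\val(a,b)] = P_A(S_N) \geq 1 - \epsilon,
\]
so $\sup_{P_B} \E[\val(a,b)] \geq 1 - \epsilon$ for every $P_A$. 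Letting $\epsilon \to 0$ yields $\inf_{P_A} \sup_{P_B} \E[\val(a,b)] = 1$.

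The argument for the right-hand quantity is symmetric. The lower bound $\sup_{P_B}\inf_{P_A} \E[\val(a,b)] \geq 0$ is trivial. For the matching upper bound, fix any $P_B \in \Pi(\B')$ and $\epsilon > 0$. The tail sets $T_N = \{b_j : j \geq N\}$ are measurable and decrease to $\emptyset$, so by continuity from above there exists $N$ with $P_B(T_N) \leq \epsilon$. Taking $P_A$ to be the Dirac measure at $a_N$ gives
\[
\E_{(a,b) \sim P_A \times P_B}[\val(a,b)] = P_B(T_N) \leq \epsilon,
\]
so $\inf_{P_A} \E[\val(a,b)] \leq \epsilon$ for every $P_B$. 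Letting $\epsilon \to 0$ yields $\sup_{P_B} \inf_{P_A} \E[\val(a,b)] = 0$.

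There is no real obstacle here: the proof is essentially the intuition from the introduction made precise. The only points that require any care are the measurability of the Dirac measures (guaranteed by the standing hypothesis on the $\sigma$-algebras) and the continuity-of-measure properties used to truncate the head and tail of $P_A$ and $P_B$ respectively, both of which are standard consequences of countable additivity.
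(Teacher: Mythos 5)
Your proof is correct and is essentially the paper's own argument: both fix an arbitrary mixed strategy for one player and play a pure strategy far enough out in the triangular enumeration, with your continuity-of-measure step being exactly the monotone/dominated convergence step the paper uses to evaluate $\lim_{j\to\infty}\E_{a\sim P_A}[\val(a,b_j)]$ and $\lim_{i\to\infty}\E_{b\sim P_B}[\val(a_i,b)]$. No gaps.
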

\begin{proof}
Let $\{a_{i} : i \in \nats\} = \A'$ 
and $\{b_{i} : i \in \nats\} = \B'$ 
so that $\val(a_{i},b_{j}) = \ind[ i \leq j ]$, 
as guaranteed by the defining property 
of an infinite triangular subgame.
Now note that, for any 
$P_{A} \in \Pi(\A')$, we have 
\begin{equation*}
\sup_{P_B \in \Pi(\B')} \E_{(a,b) \sim P_A \times P_B}[\val(a,b)] 
\geq 
\lim_{j \to \infty} \E_{a_{i} \sim P_A}[\val(a_{i},b_{j})] 
= \E_{a_{i} \sim P_A}\!\left[ \lim_{j \to \infty} \val(a_{i},b_{j}) \right] 
= 1,
\end{equation*}
where the first equality is due to the 
monotone convergence theorem, which 
applies since $\val(a_{i},b_{j})$ is 
nondecreasing in $j$, with limiting 
value $1$ (achieved for all $j \geq i$).
Since the values are bounded by $1$, 
this implies the leftmost claimed 
equality.

Likewise, for any $P_{B} \in \Pi(\B')$, 
we have 
\begin{equation*}
\inf_{P_A \in \Pi(\A')} \E_{(a,b) \sim P_A \times P_B}[\val(a,b)] 
\leq  
\lim_{i \to \infty} \E_{b_{j} \sim P_B}[\val(a_{i},b_{j})] 
= \E_{b_{j} \sim P_B}\!\left[ \lim_{i \to \infty} \val(a_{i},b_{j}) \right] 
= 0,
\end{equation*}
where the first equality  
holds by the dominated convergence theorem, 
since $|\val(a_{i},b_{j})| \leq 1$ and 
$\val(a_{i},b_{j})$ has limit $0$ as 
$i \to \infty$ (achieved for all $i > j$).
Since the values are bounded below by $0$, 
this implies the rightmost claimed 
equality, and this completes the proof.
\end{proof}

Theorem~\ref{thm:minimax} follows 
immediately from
Propositions~\ref{prop:minimax-sufficient} and \ref{prop:minimax-necessary}, as follows.

\begin{proof}[of Theorem~\ref{thm:minimax}]
If $(\A,\B,\val)$ is a VC game for which 
some subgame $(\A',\B',\val)$ satisfies 
\begin{equation}
\label{eqn:minimax-fail}
\inf_{P_A \in \Pi(\A')} \sup_{P_B \in \Pi(\B')} \E_{(a,b) \sim P_A \times P_B}[\val(a,b)] 
> 
\sup_{P_B \in \Pi(\B')} \inf_{P_A \in \Pi(\A')} \E_{(a,b) \sim P_A \times P_B}[\val(a,b)],
\end{equation}
then Proposition~\ref{prop:minimax-sufficient} 
implies $(\A',\B',\val)$ has an 
infinite triangular subgame, which is 
therefore also a subgame of $(\A,\B,\val)$.
By Proposition~\ref{prop:minimax-sufficient}, 
this remains true even if we restrict to finite-support 
probability measures.
For the other direction, any game $(\A,\B,\val)$ 
with an infinite triangular subgame 
$(\A',\B',\val)$ satisfies 
\eqref{eqn:minimax-fail} for these 
$\A'$, $\B'$ 
by Proposition~\ref{prop:minimax-necessary}.
This completes the proof.
\end{proof}

\section{Optimal Online Learning with Small Majorities}
\label{sec:majorities}

The question we address here is how simple of a 
hypothesis class $\H$ can we use while ensuring that 
an optimal mistake bound is still achievable 
(up to numerical constant factors).
Here we find this is possible using $\H$ based on 
majority votes of $O(\dvc)$ classifiers from $\C$.
We first prove a coarse bound achieved by 
the general algorithm above by  
showing that $\helly = O(\vc)$ for this class $\H$. 
This yields a mistake bound $O(\vc \LD \log(\vc))$.
We then refine this by a direct analysis, 
showing that an essentially-similar algorithm
achieves a mistake bound of $O(\LD)$ 
with this same class $\H$.

\subsection{Supporting Lemmas}
\label{sec:maj-prelims}

We first establish the following helpful lemmas.
The first is a simple application of the minimax theorem 
and the classic result on the size of $\eps$-approximating 
sets for VC classes.
The second is based on a technique for sparsifying majority 
votes having a margin, proposed 
by \citet*{moran:16}.

\begin{lemma}
\label{lem:eps-net}
Suppose $\LD < \infty$.
For any $\eps \in (0,1]$ and any set 
$S \subseteq \X \times \Y$, 
if, for every finite-support 
probability measure $\pi$ on $\C$, 
there exists $(x,y) \in S$ such that 
$\pi( h : h(x) \neq y ) > \eps$, 
then for every integer 
$m \geq \frac{\relconst \vc}{\eps} \log\!\left(\frac{1}{\eps}\right)$ 
(for a universal constant $\relconst$), 
there exists a sequence $S'$ in $S$ with $|S'|=m$
such that every $h \in \C$ has 
$\frac{1}{m} \sum_{(x,y) \in S'} \ind[ h(x) \neq y ] > \frac{\eps}{2}$.
\end{lemma}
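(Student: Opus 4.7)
The plan is a two-step argument. First I would apply the Minimax Theorem for Littlestone games (\Cref{cor:littlestone-games}) to convert the hypothesis into the existence of a ``witness'' distribution $\sigma^*$ on $S$ against which every $h \in \C$ errs with probability $\Omega(\eps)$. Then I would extract $S'$ from $\sigma^*$ via a VC sampling bound, exploiting finiteness of $\vc$.

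For the minimax step, I would consider the binary-valued zero-sum game with action sets $\A = \C$ and $\B = S$ and value function $\val(h,(x,y)) = \ind[h(x) \neq y]$. The class $\{\val(h,\cdot) : h \in \C\}$ has Littlestone dimension at most $\LD < \infty$, so this is a Littlestone game. The lemma's hypothesis translates directly into
\[
\inf_{\pi}\, \sup_{\sigma}\, \E_{(h,(x,y)) \sim \pi \times \sigma}\bigl[\val(h,(x,y))\bigr] \;\geq\; \eps,
\]
where $\pi$ and $\sigma$ range over finite-support probability measures on $\C$ and $S$ respectively (the inner supremum is witnessed by pure strategies $(x,y) \in S$). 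By \Cref{cor:littlestone-games}, this value equals $\sup_\sigma \inf_h \P_{(x,y)\sim\sigma}[h(x)\neq y]$, and allowing a slack of $\eps/4$ in the supremum yields a finite-support $\sigma^*$ on $S$ with $\P_{(x,y)\sim\sigma^*}[h(x) \neq y] \geq 3\eps/4$ for every $h \in \C$.

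For the second step, I would form $S'$ by drawing $m$ i.i.d.\ samples from $\sigma^*$ and argue that the desired property holds with positive probability. The ``mistake class'' $\F = \{(x,y) \mapsto \ind[h(x) \neq y] : h \in \C\}$ has VC dimension at most $\vc$, since flipping labels according to the fixed $y_i$ converts any $\F$-shattering of $\{(x_i,y_i)\}_i$ into a $\C$-shattering of $\{x_i\}_i$. The classical relative uniform-convergence theorem for VC classes (Vapnik--Chervonenkis, Haussler) then guarantees that for an absolute constant $\relconst$ and any $m \geq \frac{\relconst \vc}{\eps}\log(1/\eps)$, a random sample from $\sigma^*$ satisfies $\frac{1}{m}\sum_{(x,y)\in S'} \ind[h(x)\neq y] > \eps/2$ simultaneously for every $h \in \C$, with positive probability; any realization of this event gives the desired sequence.

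The main obstacle is the quantitative $1/\eps$ (rather than $1/\eps^2$) dependence in the sample size. A naive additive uniform-convergence bound (as used in the minimax proof of \Cref{prop:minimax-sufficient}) only yields sample complexity $\tilde{O}(\vc/\eps^2)$, which is too weak; the improvement to $\tilde{O}(\vc/\eps)$ hinges on the relative (multiplicative-Chernoff-style) VC bound, which exploits the $\Omega(\eps)$ lower bound on the true means to trade one factor of $1/\eps$ for $\log(1/\eps)$. The essential use of our generalized minimax (\Cref{cor:littlestone-games}) rather than von Neumann's classical theorem is that $\C$ and $S$ may both be infinite here.
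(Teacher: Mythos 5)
Your proposal is correct and follows essentially the same route as the paper: apply \Cref{cor:littlestone-games} to the game $(\C, S, \ind[h(x)\neq y])$ to obtain a finite-support measure on $S$ against which every $h\in\C$ errs with probability $\Omega(\eps)$, then invoke the relative (multiplicative) VC uniform-convergence bound of Vapnik--Chervonenkis to extract $S'$ of size $O\bigl(\frac{\vc}{\eps}\log\frac{1}{\eps}\bigr)$. Your remark that the relative bound (rather than the additive $1/\eps^2$ bound) is what secures the $1/\eps$ dependence is exactly the point of the paper's Lemma~\ref{lem:vc-bound}.
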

\begin{proof}
By Corollary~\ref{cor:littlestone-games}, 
there exists a finite-support probability measure 
$P$ on $S$ such that 
every $h \in \C$ has $P( (x,y) : h(x) \neq y ) \geq \frac{2}{3}\eps$.
Since $P$ has finite support, the classic 
relative uniform convergence bounds of \citet*{vapnik:74}
hold (see Lemma~\ref{lem:vc-bound} in Appendix~\ref{sec:vc-bounds}).
This implies that for a universal constant $\relconst$,
for any $m \geq \frac{\relconst \vc}{\eps} \ln\!\left(\frac{1}{\eps}\right)$, 
there exists a sequence $S'$ of length $m$ 
in $S$ such that every $h \in \C$ satisfies 
$\frac{1}{m} \sum_{(x,y) \in S'} \ind[ h(x) \neq y ] >  \frac{\eps}{2}$.
\end{proof}

\begin{lemma}
\label{lem:dual-eps-approx}
For any $\eps \in (0,1/2)$ and any set $S \subseteq \X \times \Y$, 
if there exists a finite-support 
probability measure $\pi$ on $\C$ 
such that every $(x,y) \in S$ satisfies  
$\pi( h : h(x) \neq y ) \leq \eps$, 
then there exists a multiset $\C' \subseteq \C$ with
$|\C'| \leq \frac{\unifconst \dvc}{\eps^2}$ 
(for $\unifconst$ a universal constant) 
such that every $(x,y) \in S$ satisfies 
$\frac{1}{|\C'|} \sum_{h \in \C'} \ind[ h(x) \neq y ] < 2\eps$.
\end{lemma}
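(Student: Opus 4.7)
The plan is to reduce the claim to a standard $\eps$-approximation statement for the dual class $\mathcal{D} = \{h \mapsto h(x) : x \in \X\}$ of $\C$, viewed as a family of $\{0,1\}$-valued functions on the sample space $\C$. By the definition of the dual VC dimension, $\vc(\mathcal{D}) = \dvc$, so this is a VC class of dimension $\dvc$ on $\C$, and $\pi$ is a probability measure on $\C$.

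The main step is to invoke the classical $\eps$-approximation theorem: for a sufficiently large universal constant $\unifconst$ and $m = \lceil \unifconst \dvc / \eps^2 \rceil$, sampling $h_1,\ldots,h_m$ i.i.d.\ from $\pi$ yields, with positive probability, a multiset $\C' = \{h_1,\ldots,h_m\}$ satisfying
\[
\sup_{x \in \X} \left| \frac{1}{m}\sum_{i=1}^{m} h_i(x) \;-\; \E_{h \sim \pi}[h(x)] \right| < \eps.
\]
This is exactly the content of the Vapnik--Chervonenkis uniform convergence bound (Lemma~\ref{lem:vc-bound}), applied to $\mathcal{D}$ on the finite-support probability space $(\C,\pi)$, combined with the probabilistic method; fixing the failure probability to be any value strictly less than $1$ is enough, since only existence of a good sample is needed.

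The conclusion follows by a two-case check on the label $y$. Fix $(x,y) \in S$ and let $\hat p = \frac{1}{m}\sum_{i=1}^m h_i(x)$, $p = \E_{h \sim \pi}[h(x)]$, so $|\hat p - p| < \eps$. If $y = 0$, then $\frac{1}{|\C'|}\sum_{h \in \C'} \ind[h(x) \neq y] = \hat p < p + \eps = \pi(h : h(x) \neq 0) + \eps \leq 2\eps$. If $y = 1$, the same sum equals $1 - \hat p < 1 - p + \eps = \pi(h : h(x) \neq 1) + \eps \leq 2\eps$. In either case we obtain the required strict bound $< 2\eps$ uniformly over $(x,y) \in S$.

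I expect no serious obstacle. The only point requiring some care is that $S$ may be infinite, which is precisely why one must work with $\vc(\mathcal{D}) = \dvc$ rather than a union bound over $S$; however, since the two $\{0,1\}$-labels are handled symmetrically through the $\eps$-approximation of $\pi$ on $\mathcal{D}$ itself, one does not incur the extra logarithmic factor that a naive bound on $\vc(\{f_{(x,y)}\})$ via Sauer--Shelah and closure under complements would produce, yielding the desired $|\C'| = O(\dvc/\eps^2)$ without a $\log(\dvc)$ overhead.
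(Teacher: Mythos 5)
Your proposal is correct and follows essentially the same route as the paper: apply the uniform convergence bound (Lemma~\ref{lem:vc-bound}) to the dual class $\{h \mapsto h(x) : x \in \X\}$ on the finite-support space $(\C,\pi)$, using $\vc$ of the dual class $= \dvc$, and conclude by the probabilistic method. Your explicit two-case check on the label $y$ just spells out the step the paper's proof leaves as "follows immediately."
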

\begin{proof} 
To prove this, we apply the classic uniform convergence 
guarantees based on the chaining argument.  
Specifically, we apply 
Lemma~\ref{lem:vc-bound} of 
Appendix~\ref{sec:vc-bounds} 
with $\Z = \C$ and $\F$ the set of 
functions $g_x : \C \to \{0,1\}$, $x \in \X$,  
defined as $g_x(h) = h(x)$ for $h \in \C$.
The existence of $\C'$ with the claimed properties 
then follows immediately from Lemma~\ref{lem:vc-bound} 
by noting that $\vc(\F) = \dvc(\C)$.
\end{proof}

\subsection{The Dual Helly Number of Small Majorities}
\label{sec:helly-of-majorities}

To start, we state a coarse bound based on a direct 
application of Theorem~\ref{thm:upper}, by 
bounding the dual Helly number.

\begin{proposition}
\label{prop:helly-of-majorities}
For a numerical constant $c$ and $\H = \Maj(\C^{c\dvc})$, 
it holds that 
$\helly = O(\vc)$, 
and hence 
$\MB(\C,\H) = O(\LD \vc \log(\vc))$.
\end{proposition}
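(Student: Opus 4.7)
The plan is to establish the dual Helly bound $\helly(\C, \Maj(\C^{c\dvc})) = O(\vc)$; the mistake-bound statement $O(\LD\vc\log(\vc))$ then follows immediately by plugging $\helly = O(\vc)$ into Theorem~\ref{thm:upper}. I would argue for the dual Helly bound by contrapositive: given a set $S \subseteq \X \times \Y$ that is $\Maj(\C^{c\dvc})$-unrealizable, the goal is to exhibit a $\C$-unrealizable subset $S' \subseteq S$ with $|S'| = O(\vc)$. The two key tools are Lemma~\ref{lem:eps-net} and Lemma~\ref{lem:dual-eps-approx}, which form a complementary pair: the former converts ``no good mixture of $\C$'' into a small bad subset, and the latter sparsifies a good mixture into a small good multiset.

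Fix a constant $\eps \in (0,1/4)$ (say $\eps = 1/5$), and dichotomize on whether there exists a finite-support probability measure $\pi$ on $\C$ such that $\pi(h : h(x)\neq y) \leq \eps$ for every $(x,y) \in S$. In the first case, Lemma~\ref{lem:dual-eps-approx} produces a multiset $\C' \subseteq \C$ with $|\C'| \leq \unifconst\dvc/\eps^2 = O(\dvc)$ such that every $(x,y) \in S$ has average error on $\C'$ strictly less than $2\eps < 1/2$, which forces $\Maj(\C')$ to realize $S$. Choosing $c$ so that $\unifconst/\eps^2 \leq c$ places $\Maj(\C')$ inside $\Maj(\C^{c\dvc})$, contradicting the assumed $\H$-unrealizability; so this case does not occur.

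In the remaining case, for every finite-support $\pi$ on $\C$ there exists some $(x,y) \in S$ with $\pi(h : h(x) \neq y) > \eps$. This is exactly the hypothesis of Lemma~\ref{lem:eps-net}, which then yields, for $m = \lceil \relconst \vc/\eps \cdot \log(1/\eps) \rceil = O(\vc)$, a sequence $S' \subseteq S$ of length $m$ on which every $h \in \C$ misclassifies more than an $\eps/2$ fraction. Since $\eps m/2 \geq 1$ once $m$ is taken above the threshold, every $h \in \C$ misclassifies at least one point of $S'$, so $S'$ is $\C$-unrealizable and has the required $O(\vc)$ cardinality. Combining the two cases proves $\helly = O(\vc)$, and Theorem~\ref{thm:upper} delivers the final bound $\MB(\C,\H) = O(\LD\vc\log(\vc))$.

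The main subtlety, rather than any deep obstacle, is the coordination of the two $\eps$-thresholds: $\eps$ must be small enough that $2\eps$ remains below $1/2$ for the majority in Lemma~\ref{lem:dual-eps-approx} to be strictly correct, yet bounded away from $0$ so that the factors $1/\eps^2$ and $(1/\eps)\log(1/\eps)$ in the two lemmas contribute only to the hidden constants in $O(\dvc)$ and $O(\vc)$. Any $\eps \in (0,1/4)$ suffices, and the numerical constant $c$ is simply the resulting product of the constants $\relconst, \unifconst$ and the chosen $1/\eps^2$.
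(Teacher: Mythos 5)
Your proposal is correct and follows essentially the same route as the paper's proof: apply the contrapositive of Lemma~\ref{lem:dual-eps-approx} to an $\H$-unrealizable $S$ (the paper uses $\eps=1/4$, you use $\eps=1/5$; both work since the lemma's conclusion is a strict inequality), then invoke Lemma~\ref{lem:eps-net} to extract a $\C$-unrealizable $S'$ of size $O(\vc)$, and finish with Theorem~\ref{thm:upper}. The only cosmetic omission is the one-line remark that the claim is vacuous when $\LD=\infty$, which is needed before invoking Lemma~\ref{lem:eps-net}.
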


\begin{proof}
The statement is vacuous for $\LD = \infty$, so suppose $\LD < \infty$.
Let $c = 16 \unifconst$, for $\unifconst$ from Lemma~\ref{lem:dual-eps-approx}.
Let $S$ be a set not realizable by $\H$.
In particular, applying Lemma~\ref{lem:dual-eps-approx} 
with $\eps = 1/4$, we conclude that for every 
finite-support probability measure $\pi$ on $\C$, 
there exists $(x,y) \in S$ such that 
$\pi( h : h(x) \neq y ) > 1/4$.
Lemma~\ref{lem:eps-net} then implies there exists 
$S' \subseteq S$ with 
$|S'| \leq \left\lceil 4\relconst \vc \log(4) \right\rceil \leq 7\relconst \vc$, 
such that every $h \in \C$ has 
$\frac{1}{|S'|} \sum_{(x,y) \in S'} \ind[ h(x) \neq y ] > \frac{1}{8}$.
In particular, this implies $S'$ is not realizable by $\C$.
Thus, $\helly \leq 7\relconst \vc$.
A bound 
$\MB(\C,\H) \leq 28\relconst \LD \vc \ln(14\relconst \vc)$
then follows from Theorem~\ref{thm:upper}.
\end{proof}

\subsection{Optimal Mistake Bound via a Direct Analysis}
\label{sec:opt-majorities}

Proposition~\ref{prop:helly-of-majorities} reveals that 
it is possible to learn any $\C$ using majorities 
of $O(\dvc)$ elements of $\C$, with mistake bound at most 
$O(\LD \vc \log(\vc))$.
Here we find that this 
can be improved to an essentially \emph{optimal}
mistake bound: 
that is, $O(\LD)$.
To put this another way, we find that the 
optimal $\SOA$ predictor of 
\citet*{littlestone:88} can be approximated (in the appropriate sense) by small majority votes of classifiers from $\C$.

The following is a more-detailed form of 
Theorem~\ref{thm:majorities}, from which the 
original statement of Theorem~\ref{thm:majorities} 
immediately follows.

\begin{theorem}
\label{thm:majorities-detailed}
For $c = 36\unifconst$ 
and 
$\H = \Maj(\C^{c\dvc})$,
it holds that 
$\MB(\C,\H) \leq 80 \LD$.
\end{theorem}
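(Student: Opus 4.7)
The plan is to reuse the algorithm of Theorem~\ref{thm:upper} essentially verbatim, instantiating the hypothesis class $\H$ as $\Maj(\C^{c\dvc})$, and to show that with this choice the ``else'' branch (lines~10--15) of the algorithm never fires. This reduces the analysis to the mistake-step case only, in which the weight potential of Theorem~\ref{thm:upper} immediately yields a bound of the form $\O(\LD \log(1/\eta))$; choosing $\eta$ to be a small numerical constant then produces $80\LD$ and $c = 36\unifconst$.

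The central fact to establish is the following \emph{Key Lemma}: for any finite weighted family $\Hyps = \{(C_i, w_i)\}$ of nonempty subclasses of $\C$ and suitable $\eta$, the set $\HighVote(\Hyps,\eta)$ is realizable by $\Maj(\C^{c\dvc})$. I would prove this via Corollary~\ref{cor:littlestone-games} (Minimax for Littlestone games) together with Lemma~\ref{lem:dual-eps-approx} (dual sparsification). Consider the zero-sum game where Alice plays $h\in\C$, Bob plays $(x,y)\in\HighVote(\Hyps,\eta)$, and the payoff is $\ind[h(x)\neq y]$; since $\C$ has finite Littlestone dimension, Corollary~\ref{cor:littlestone-games} applies. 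The game's value can be bounded by some $\eta_1<1/2$ as follows: for any finitely-supported distribution $Q$ on $\HighVote(\Hyps,\eta)$, averaging the $\HighVote$-margin condition against $Q$ and then against $\Hyps$ produces some $(C_{i^*},w_{i^*})$ with $Q(\SOA_{C_{i^*}}(x)=y)\geq 1-\eta$; feeding an i.i.d.\ $Q$-sample into the $\SOA$-based online learner for $C_{i^*}$ (which has Littlestone dimension at most $\LD$) and invoking the probabilistic method then exhibits an actual concept $h\in C_{i^*}\subseteq\C$ with $Q$-error at most $\eta_1$. Minimax now yields a finitely-supported distribution $\pi$ on $\C$ with $\pi(\{h:h(x)\neq y\})\leq\eta_1$ for every $(x,y)\in\HighVote(\Hyps,\eta)$, and Lemma~\ref{lem:dual-eps-approx} sparsifies $\pi$ to a multiset of at most $\unifconst\dvc/\eta_1^2\leq c\dvc$ concepts from $\C$ whose majority is correct on all of $\HighVote(\Hyps,\eta)$.

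Given the Key Lemma, the remainder of the argument reuses the analysis of Theorem~\ref{thm:upper}: Step~3 always succeeds and Step~10 never triggers, so each mistake multiplies the total weight $W_n$ by a factor at most $1-\eta(1-\eta)$, while the target $\target\in\C$ continues to witness some $(C_i,w_i)\in\Hyps$ with $w_i\geq\eta^\LD$ by the same $\SOA$-consistency argument used there. Comparing the two bounds gives $(1-\eta(1-\eta))^{M_n}\geq\eta^\LD$, hence $M_n=\O(\LD\log(1/\eta)/(\eta(1-\eta)))$; calibrating $\eta$ and $\eta_1\geq\eta$ to small numerical constants produces the claimed $c\leq 36\unifconst$ and $M_n\leq 80\LD$.

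The main obstacle is the Key Lemma, and specifically the bound on the value of the minimax game. Translating a statement about the weighted majority of $\SOA$ predictors -- a purely structural, Littlestone-dimension statement -- into a distributional statement about actual concepts in $\C$ is subtle: each $\SOA_{C_i}$ is itself improper, so ``most $\SOA_{C_i}$'s agree with $y$ on $x$'' does not directly exhibit a concept $h\in\C$ that agrees with $y$ on $x$. The bridge is Littlestone's $\SOA$-mistake bound applied to a fictitious realizable sequence drawn i.i.d.\ from the adversarial $Q$, from which one extracts by the probabilistic method an honest concept $h\in C_{i^*}$ with small $Q$-error. Once that bridge is in place, the remaining constant-tuning to hit $(c,80\LD)=(36\unifconst,80\LD)$ is a routine calibration.
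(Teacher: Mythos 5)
Your overall strategy diverges from the paper's at the decisive point, and the divergence hides a genuine gap. The paper does \emph{not} show that the realizability check always succeeds; its proof (via Theorem~\ref{thm:randomized-proper-detailed} with $\eps=1/3$) explicitly allows the ``else'' branch to fire, and handles it by \emph{branching}: when no suitable majority exists, Lemma~\ref{lem:eps-net} produces $m$ points on which every $h\in\C$ errs on a $>\eps/4$ fraction, the tree splits into $m$ children, and the count of target-consistent nodes grows by a factor $(\eps/4)m$ per such round. This multiplicative growth in the \emph{lower} bound on the total weight is what offsets the weight decay in the upper bound and yields $O(\LD)$. Your plan replaces all of this with a Key Lemma asserting that $\HighVote(\Hyps,\eta)$ is \emph{always} realizable by $\Maj(\C^{c\dvc})$, which would make the else branch vacuous. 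If that lemma were available, the paper would not need the branching machinery at all.

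The proof you sketch for the Key Lemma breaks exactly at the step you yourself identify as the main obstacle. From the margin condition you correctly extract an index $i^*$ with $Q\bigl((x,y):\SOA_{C_{i^*}}(x)=y\bigr)\geq 1-\eta$, but the bridge from there to an honest $h\in C_{i^*}$ with small $Q$-error is invalid: the i.i.d.\ $Q$-sample is labeled by $\Maj(\Hyps)$ (equivalently, mostly by $\SOA_{C_{i^*}}$), and this labeling need not be $C_{i^*}$-realizable --- the graph of an $\SOA$ predictor is in general not realizable by the class it is built from (singletons: $\SOA_{\C}\equiv 0\notin\C$). Littlestone's mistake bound, and hence any online-to-batch or probabilistic-method conversion based on it, applies only to realizable sequences; on an unrealizable $Q$-sample the $\SOA$-based learner for $C_{i^*}$ gives no mistake guarantee, and even when it makes no mistakes (as in the singleton example) the states it visits are improper, so the argument never exhibits a concept of $\C$ with $Q$-error bounded away from $1/2$. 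Without that, the minimax step has nothing to apply to, and the Key Lemma is unsupported. To repair the argument you would either need a genuinely different proof that the game value over $\HighVote(\Hyps,\eta)$ is $<1/2-\Omega(1)$ for every reachable $\Hyps$ (which I do not believe is available --- it is precisely what the paper's else branch is designed to avoid needing), or you should adopt the paper's route: prove the margin version (Theorem~\ref{thm:randomized-proper-detailed}) with the branching-tree potential argument and specialize $\eps=1/3$.
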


Theorem~\ref{thm:majorities-detailed} 
will immediately follow from a result stated in the 
following section, which presents a stronger result 
where we are guaranteed not merely a small number of 
mistakes, but also a small number of points where 
the majority vote fails to have \emph{high margin}: 
namely, Theorem~\ref{thm:randomized-proper-detailed}.
Specifically, Theorem~\ref{thm:majorities-detailed} 
follows by plugging $\eps = 1/3$ 
into Theorem~\ref{thm:randomized-proper-detailed}.

\section{Online Learning with Votes of Large Margin}
\label{sec:randomized-proper}

Since $\helly(\C,\C)$ 
is sometimes large or infinite, proper learning isn't 
always viable; however, since Proposition~\ref{prop:helly-of-majorities} 
indicates $\helly(\C,\H)$ is small for $\H$ the set 
of majority votes of $O(\dvc)$ classifiers of $\C$, 
we can define predictors that, for each time $t$, 
\emph{sample} a classifier $\hat{h}_t \sim \pi_t$ for some 
distribution $\pi_t$ over $\C$, and will be correct 
with probability greater than $1/2$ against an 
adversary that only knows $\pi_t$ before selecting the 
next $(x_t,y_t)$, but does not know the specific 
$\hat{h}_t$ sampled from $\pi_t$.  
We may however 
be interested in having even greater probability of 
predicting correctly, defining a loss function that 
is $1$ if $\pi_t( h : h(x_t) \neq y_t ) > \epsilon$ 
for some given $\epsilon \in (0,1/2)$.  We would 
then be interested in bounding the number of times $t$ 
for which this occurs.

Equivalently, we can interpret this criterion in terms 
of the \emph{margin} of the majority vote classifier: 
that is,
we can think of the learning algorithm as outputting
the \emph{conditional mean} 
$\bar{h}_{t}(x) := \pi_t( h : h(x_t) = 1 )$, 
and we are interested in bounding the number 
of times $t$ where 
$| \bar{h}_{t}(x_t) - y_t | > \epsilon$.

The following is a more-detailed restatement of 
Theorem~\ref{thm:randomized-proper}, from which 
the statement of Theorem~\ref{thm:randomized-proper} 
immediately follows.

\begin{theorem}
\label{thm:randomized-proper-detailed}
Let $c=4\unifconst$ (for $\unifconst$ from Lemma~\ref{lem:dual-eps-approx}).  
For any $\eps \in (0,1/2)$,
there is an algorithm $\alg$ 
with $\H(\C,\alg) \subseteq \Vote(\C^{c\dvc/\eps^2})$ 
such that, for any $T \in \nats$, 
running $\alg$ on any $\C$-realizable sequence 
$(x_1,y_1),\ldots,(x_T,y_T)$, 
there are at most 
$\frac{8 \LD}{\eps (1-\eps/8)} \ln\!\left(\frac{8}{\eps}\right)$
times $t$ where $| \bar{h}_t(x_t) - y_t | > \eps$.
\end{theorem}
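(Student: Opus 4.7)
My plan is to adapt the algorithm used in the proof of Theorem~\ref{thm:upper}, setting the margin threshold to $\eta = \eps/8$ and constructing the predictor $\bar{h}_t$ as a sparsified mixed strategy on $\C$. I would maintain the weighted collection $\Hyps = \{(C_i,w_i)\}$ initialized to $\{(\C,1)\}$. At each round, given the current $\HighVote(\Hyps,\eta)$, I would invoke Corollary~\ref{cor:littlestone-games} on the zero-sum game in which the learner plays $h \in \C$, the adversary plays $(x,y) \in \HighVote(\Hyps,\eta)$, and the payoff is $\ind[h(x) \neq y]$. If the value of this game is at most $\eps/2$, the $\inf$ side of the minimax equality produces a finitely supported $\pi \in \Pi(\C)$ with $\pi(h : h(x) \neq y) \leq \eps/2$ on all of $\HighVote(\Hyps,\eta)$; I would then feed $\pi$ into Lemma~\ref{lem:dual-eps-approx} with parameter $\eps/2$ to obtain a multiset $\C' \subseteq \C$ of size at most $4\unifconst\dvc/\eps^2 = c\dvc/\eps^2$, and set $\bar{h}_t = \Vote(\C')$, which by construction lies in $\Vote(\C^{c\dvc/\eps^2})$ and satisfies $|\bar{h}_t(x) - y| < \eps$ for every $(x,y) \in \HighVote(\Hyps,\eta)$. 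If instead the value exceeds $\eps/2$, the $\sup$ side supplies a distribution on $\HighVote(\Hyps,\eta)$ for which every $h \in \C$ has error exceeding $\eps/2$, and Lemma~\ref{lem:eps-net} (with parameter $\eps/2$) extracts a $\C$-unrealizable subset of size $O(\vc\log(1/\eps)/\eps)$, which I would use as the $\C$-unrealizable witness for the branching step of Theorem~\ref{thm:upper}.

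The analysis then mirrors Theorem~\ref{thm:upper}. The invariant that some $(C^*,w^*) \in \Hyps$ contains $\target$ with $w^* \geq \eta^{\LD}$ still holds, because the target incurs at most $\LD$ SOA mistakes across all updates and each contributes a single multiplicative factor of $\eta$; hence $W_n \geq \eta^{\LD}$. On every round where the predict branch fires, $\bar{h}_t$ is $\eps$-correct on $\HighVote(\Hyps,\eta)$, so a margin violation $|\bar{h}_t(x_t) - y_t| > \eps$ forces $(x_t,y_t) \notin \HighVote(\Hyps,\eta)$, which in turn forces the weighted fraction of indices $i$ with $\SOA_{C_i}(x_t) \neq y_t$ to be at least $\eta$; the multiplicative update therefore contracts $W$ by at least a factor $1 - \eta(1-\eta)$. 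Combining $\eta^{\LD} \leq W_n$ with this contraction yields $M_n \leq \LD \log(1/\eta)/(\eta(1-\eta)) = \frac{8\LD}{\eps(1-\eps/8)}\ln(8/\eps)$, exactly the claimed bound on the number of margin-violation rounds.

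The hard part, I expect, is cleanly separating the margin-violation count $M_n$ from the branching-round count $N_n$ in the potential inequality. With $\eta = \eps/8$ and a $\C$-unrealizable witness of size $\Theta(\vc\log(1/\eps)/\eps)$, the branching step from Theorem~\ref{thm:upper} can in principle multiply $W$ by a factor strictly greater than $1$, which would couple $M_n$ to $N_n$ and spoil the purely $\eps$-dependent final bound. Resolving this will require a careful choice of the branching weight multiplier, distinct from the margin threshold $\eta$, so that the per-branch factor stays at most $1$ while the $w^* \geq \eta^{\LD}$ invariant is preserved; this is where most of the delicate bookkeeping sits. Once the branching step is verified to leave $W_n$ non-increasing, the $M_n$ bound follows exactly as above, and $\H(\C,\alg) \subseteq \Vote(\C^{c\dvc/\eps^2})$ by construction.
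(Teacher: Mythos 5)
Your algorithm is set up correctly (the case split via the minimax theorem, sparsification through Lemma~\ref{lem:dual-eps-approx} to get $\bar h_t \in \Vote(\C^{c\dvc/\eps^2})$, and Lemma~\ref{lem:eps-net} for the witness set all match the paper), and you have correctly located the crux: the branching step inflates the total weight $W$. But your proposed resolution --- choose a branching multiplier so that $W$ is non-increasing while preserving the single-element invariant $w^* \geq \eta^{\LD}$ --- cannot work. On a branching round each of the $m$ witness points spawns its own copy of every $(C_i,w_i)$, and for each $j$ the weighted fraction of indices with $\SOA_{C_i}(\tx_j)\neq \ty_j$ (which keep their full weight) can be as large as $\eps/8$. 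Hence for \emph{any} multiplier $\eta'\in[0,1]$ applied to the remaining fraction, the total weight can grow by a factor of at least $m\cdot \eps/8 = \Theta(\vc\log(1/\eps)) \gg 1$. The only way to prevent growth is to also divide by $m$, but that destroys the single-element lower bound $w^*\ge\eta^{\LD}$, since any one target-consistent child then carries only a $1/m$ share of its parent's weight. So with your bookkeeping the inequality $\eta^{\LD}\le W_n$ picks up an uncontrolled factor per branching round and the bound on $M_n$ does not follow.

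The paper resolves this differently: it lets $W$ grow on branching rounds but makes the \emph{lower} bound grow at the same rate, by counting target-containing elements rather than tracking one. The witness set from Lemma~\ref{lem:eps-net} is not used merely as a $\C$-unrealizable set (which would guarantee only one target-consistent branch); its stronger property --- every $h\in\C$, in particular $\target$, disagrees with more than an $\eps/4$ fraction of the $m$ witness points --- guarantees at least $(\eps/4)m$ target-consistent children per target-containing parent. Inductively, after $N_n$ branching rounds there are at least $((\eps/4)m)^{N_n}$ elements of $\Hyps$ containing $\target$, each of weight at least $\eta^{\LD}$, so $W_n \ge ((\eps/4)m)^{N_n}\eta^{\LD}$. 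The factor $m^{N_n}$ then cancels against the $m^{N_n}$ growth in the upper bound, the residual $((1-\eps/8)\eta+\eps/8)/(\eps/4) = 1-\eps/16 < 1$ controls $N_n$, and the $M_n$ bound drops out as you computed. You need to replace your non-increasing-weight plan with this counting argument; everything else in your sketch then goes through.
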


This bound will be achieved by the following algorithm, 
which takes as input $\C$ and any value $\eps \in (0,1/2)$, 
and processes a $\C$-realizable 
adversarial sequence $(x_1,y_1),\ldots,(x_T,y_T)$.

\begin{bigboxit}
0. Initialize $\Hyps = \{ (C_1,w_1) \} = \{ (\C,1) \}$, 
$\eta = \eps/8$, $m = \left\lceil \frac{2\relconst \vc}{\eps}  \log\!\left(\frac{2}{\eps}\right) \right\rceil$, $t = 1$\\
1. Repeat while $t \leq T$\\
2. \quad If $\exists h \in \Vote(\C^{c\dvc/\eps^2})$ with 
$\sup_{(x,y) \in \HighVote(\Hyps,\eps/8)} |h(x)-y| \leq \eps$\\
3. \qquad Choose $\bar{h}_{t} = h$ for some such $h$, predict $\bar{y}_t = \bar{h}_t(x_t)$\\
4. \qquad If $|\bar{y}_t - y_t| > \eps$\\
5. \qquad\quad For each $(C_i,w_i) \in \Hyps$\\
6. \qquad\qquad If $\SOA_{C_i}(x_t) \neq y_t$, set $w_i \gets \eta \cdot w_i$\\
7. \qquad\qquad $C_i \gets \{ h \in C_i : h(x_t)=y_t \}$\\
8. \qquad\qquad If $C_i = \emptyset$, remove $(C_i,w_i)$ from $\Hyps$\\
9. \qquad $t \gets t+1$\\
10.\quad Else let $\{(\tx_1,\ty_1),\ldots,(\tx_m,\ty_m)\} \subseteq \HighVote(\Hyps,\eps/8)$ be such that\\
\qquad\qquad every $h \in \C$ has 
$\frac{1}{m}\sum_{i=1}^{m} \ind[h(x_i) \neq y_i] > \eps/4$\\
11.\qquad For each $(C_i,w_i) \in \Hyps$\\
12.\qquad\quad For each $j \leq m$\\
13.\qquad\qquad If $\SOA_{C_i}(\tx_j) = \ty_j$, set $w_{ij} \gets \eta \cdot w_i$, else $w_{ij} \gets w_i$\\
14.\qquad\qquad Let $C_{ij} = \{ h \in C_i : h(\tx_j) = 1-\ty_j \}$\\
15.\qquad $\Hyps \gets \{ (C_{ij},w_{ij}) : C_{ij} \neq \emptyset \}$
\end{bigboxit}

We now present the proof of Theorem~\ref{thm:randomized-proper-detailed}.

\begin{proof}[of Theorem~\ref{thm:randomized-proper-detailed}]
The proof is similar to the proof of Theorem~\ref{thm:upper}, 
with a few important changes.
Suppose $\LD < \infty$, let $\target \in \C$ be a concept 
correct on $\{(x_t,y_t)\}_{t=1}^{T}$, 
and let $\H = \Vote(\C^{c\dvc/\eps^2})$. 
First note that, by Lemma~\ref{lem:dual-eps-approx}, 
on any given round, if the condition in Step 2 fails, 
then for every finite-support probability measure $\pi$ 
on $\C$, there exists $(x,y) \in \HighVote(\Hyps,\eps/8)$ 
such that 
$\pi( h : h(x) \neq \Maj(\Hyps)(x) ) > \eps/2$.
Thus, the existence of the sequence $(\tx_1,\ty_1),\ldots,(\tx_m,\ty_m)$ in 
Step 10 is guaranteed by Lemma~\ref{lem:eps-net}.

On each round where the condition in Step 2 holds 
but $|\bar{y}_t - y_t| > \eps$, 
we clearly have that $\target$ is correct on the $(x_t,y_t)$ 
used to update the sets $C_i$. 
Moreover, on each round where 
the condition in Step 2 fails, since $\target \in \C$, 
the defining property of $(\tx_1,\ty_1),\ldots,(\tx_m,\ty_m)$ 
in Step 10 guarantees that at least $(\eps/4) m$ 
of these $(\tx_j,\ty_j)$ have 
$\target(\tx_j) = 1-\ty_j$.
So if we think of the set $\Hyps$ as developing like a tree 
(with each element at the end of the round 
being updated from the previous round using the point $(x_t,y_t)$ on rounds of the first type, 
or else branching off of a previous element by updating with one of the $(\tx_j,1-\ty_j)$ examples 
on rounds of the second type), 
then there are a number of paths in the tree where 
all of the corresponding examples $(x,y)$ have $y=\target(x)$.
More formally, if at the beginning of a round 
there exist $q$ elements $(C,w) \in \Hyps$ with
$\target \in C$, then on rounds where the 
condition in Step 2 holds, at the end of the round 
there will still be exactly $q$ such elements in $\Hyps$ 
(i.e., none of them are removed in Step 8, since all 
contain $\target$); on the other hand, on rounds where the 
condition in Step 2 fails, then since at least $(\eps/4) m$ 
of the examples $(\tx_j,\ty_j)$ have $\target(\tx_j)=1-\ty_j$,
at the end of the round there will be at least $(\eps/4) m q$ 
elements $(C',w') \in \Hyps$ with $\target \in C'$.

Now suppose the algorithm executes the outermost loop 
at least $n$ times, and consider the total state of the algorithm 
after completing round $n$ of the outermost loop.
Let $t_n$ denote the value of $t$ 
after completing this round, and let 
$M_n$ denote the number of 
$t \in \{1,\ldots,t_n-1\}$ with 
$|\bar{y}_t - y_t| > \eps$. 
Let $N_n$ denote the number of the first $n$ rounds 
for which 
the condition in Step 2 fails.
Let $W_n$ denote the total of the weights in $\Hyps$ after round $n$, and define $W_0=1$.

Applying the above argument inductively, 
we have that after completing round $n$ of the outermost loop, 
there are at least $((\eps/4) m)^{N_n}$ elements 
$(C,w) \in \Hyps$ with $\target \in C$.
Moreover, we note that any $(C,w) \in \Hyps$ 
with $\target \in C$ must have 
$w \geq \eta^{\LD}$, since on any 
sequence of labeled examples $(x'_i,\target(x'_i))$,
there are at most $\LD$ times $i$ with 
$\SOA_{\C_{\{(x'_j,\target(x'_j)) : j < i\}}}(x'_i) \neq \target(x'_i)$, 
as established by \citet*{littlestone:88}.
It follows that after completing round $n$ we have 
\begin{equation*}
W_n \geq \left( (\eps/4) m \right)^{N_n} \cdot \eta^{\LD}.
\end{equation*}

On the other hand, on every round $n' \leq n$ 
where 
the condition in Step 2 holds
but $|\bar{y}_t - y_t| > \eps$ in Step 4 
(for $t = t_{n'}-1$), 
note that it cannot be that $(x_t,y_t) \in \HighVote(\Hyps,\eps/8)$, 
since this would contradict 
either $|\bar{y}_t - y_t| > \eps$ 
or the defining property of $\bar{h}_t$.
Thus, since $(x_t,y_t) \notin \HighVote(\Hyps,\eps/8)$, 
at least $\eps/8$ fraction of the 
total weight is multiplied by $\eta$: 
that is, 
$W_{n'} \leq W_{n'-1} \left( \eta (\eps/8) + (1-(\eps/8)) \right)$.
Furthermore, on every round where 
the condition in Step 2 fails, 
for each $j \leq m$, 
since $(\tx_j,\ty_j) \in \HighVote(\Hyps,\eps/8)$, 
it must be that at least $1-(\eps/8)$ fraction of the 
total weights $w_i$ from the previous round 
will have $w_{ij} = \eta \cdot w_{i}$.
Together with the growth by a factor of $m$ from branching, 
we have that on such rounds $n' \leq n$, 
$W_{n'} \leq W_{n'-1} m \left( \left(1-\frac{\eps}{8}\right) \eta + \frac{\eps}{8} \right)$.
By induction we have that, after round $n$, 
\begin{equation*}
W_n \leq \left( \eta \frac{\eps}{8} + 1-\frac{\eps}{8}\right)^{M_n} m^{N_n} \left( \left(1-\frac{\eps}{8}\right)\eta + \frac{\eps}{8} \right)^{N_n}.
\end{equation*}

Combining the upper and lower bounds, we have
\begin{equation*}
\left( \frac{\eps}{4} m \right)^{N_n} \cdot \eta^{\LD} 
\leq \left( \eta \frac{\eps}{8} + 1-\frac{\eps}{8}\right)^{M_n} m^{N_n} \left( \left(1-\frac{\eps}{8}\right)\eta + \frac{\eps}{8} \right)^{N_n}.
\end{equation*}
Plugging in $\eta = \eps/8$ we have 
\begin{align*}
\left( \frac{\eps}{4} m \right)^{N_n} \cdot \left(\frac{\eps}{8}\right)^{\LD} 
& \leq \left( 1 - \frac{\eps}{8}\!\left( 1 - \frac{\eps}{8} \right) \right)^{M_n} \left( \left(1-\frac{\eps}{16}\right) \frac{\eps}{4} m \right)^{N_n} 
\\ & \leq \exp\!\left\{ - \frac{\eps}{8}\left( 1 - \frac{\eps}{8} \right) M_n \right\} \cdot \left( \left(1-\frac{\eps}{16}\right) \frac{\eps}{4} m \right)^{N_n}. 
\end{align*}
Taking logarithms of both sides and simplifying, we have
\begin{equation*}
M_n  
+ 
N_n \frac{8}{\eps \left( 1 - \eps/8 \right)} \ln\!\left( \frac{1}{1-\eps/16} \right) 
\leq \frac{8 \LD}{\eps (1-\eps/8)} \ln\!\left(\frac{8}{\eps}\right).
\end{equation*}
This has two important implications. 
First, since we always have $n = N_n + t_n-1$, 
and $t_n \leq T+1$ while the above inequality implies 
$N_n \leq \LD \frac{\ln(8/\eps)}{\ln(1/(1-\eps/16))} < \infty$, 
we may conclude that the algorithm will terminate after 
a finite number of rounds.
Second, the above inequality further implies 
$M_n \leq \frac{8\LD}{\eps (1-\eps/8)} \ln\!\left(\frac{8}{\eps}\right)$ for all rounds $n$ 
in the algorithm, so that this also bounds the total number 
of times $t \leq T$ with $|\bar{y}_t - y_t| > \eps$.
This completes the proof.
\end{proof}

\begin{remark}
\label{rem:general-conversion}
We also remark that the above algorithm can actually 
be executed with \emph{any} online learning algorithm 
$\alg$ in place of $\SOA$, resulting in a conversion 
to a method using votes of $O(\dvc/\eps^2)$ concepts 
in $\C$, and having a number of rounds with 
$|\bar{h}_t(x_t) - y_t| > \epsilon$ at most 
$O\!\left(\MB(\C,\alg,T) \frac{1}{\eps} \log \frac{1}{\eps}\right)$.
\end{remark}

\section{Near-Optimal Agnostic Online Learning with Randomized Proper Predictors}
\label{sec:agnostic}

This section presents the details of the 
algorithm and proof for Theorem~\ref{thm:agnostic}.
Recall the statement of the theorem, as follows.

\noindent \textbf{Theorem~\ref{thm:agnostic} (Restated)}~
For any $T \in \nats$, 
there is an algorithm $\alg$ 
with $\H(\alg,T) \subseteq \Vote(\C^{m})$, 
where $m = O\!\left(\frac{\dvc T}{\LD \log(T/\LD)}\right)$, 
such that for any sequence 
$(x_1,y_1),\ldots,(x_T,y_T) \in \X \times \Y$, 
\begin{equation*} 
\sum_{t=1}^{T} \left| \bar{h}_t(x_t) - y_t \right| - \min_{h \in \C} \sum_{t=1}^{T} \ind[ h(x_t) \neq y_t ] = O\!\left( \sqrt{\LD T \log(T/\LD)} \right).
\end{equation*}

Before presenting the proof, let us first note that the 
algorithm we propose is, to a large extent, constructive, 
in the sense that on each round $t$ it constructs a 
probability measure $p_t$ on $\C$, based on 
$(x_1,y_1),\ldots,(x_{t-1},y_{t-1})$, 
so that $\bar{h}_t(x_t) = p_t( h : h(x_t) = 1 )$.
Thus, we may regard this method as a 
\emph{randomized proper} learning algorithm, 
in the sense that on each round, if we draw a 
random $h_t \sim p_t$ (independently, given the data 
sequence) then 
$\E \sum_{t=1}^{T} \ind[ h_t(x_t) \neq y_t ] 
= \sum_{t=1}^{T} \left| \bar{h}_t(x_t) - y_t \right|$, 
so that the \emph{expected regret} of these sampled 
classifiers is at most 
$O\!\left( \sqrt{ \LD T \log(T/\LD) } \right)$.

As discussed above, this result can be interpreted as a more 
constructive version of a result of
\citet*{rakhlin2015online}.  The result 
in Theorem~\ref{thm:agnostic} is also more general, 
as it removes additional restrictions on $\C$ 
required by \citet*{rakhlin2015online}.

Turning now to the task of proving Theorem~\ref{thm:agnostic}, 
we will rely on the following Lemma from \citet*{ben2009agnostic}.
For any sequence $x_1,\ldots,x_t$, 
abbreviate $x_{1:t} = (x_1,\ldots,x_t)$.

\begin{lemma}[{\citealp*[][Lemma 12]{ben2009agnostic}}]\label{lem:benagnostic}
For any concept class $\C$ of Littlestone dimension $\LD$, 
for any $T \in \nats$, there exists a family 
$\Experts_{T} := \{ \expert_{I} : I \subseteq \{1,\ldots,T\}, |I| \leq \LD \}$ 
of functions $\X^{*} \to \Y$ such that, 
for every $x_1,\ldots,x_T \in \X$,
\begin{equation*}
\left\{ (h(x_1),\ldots,h(x_T)) : h \in \C \right\} 
= \left\{ (\expert_{I}(x_1),\expert_{I}(x_{1:2}),\ldots,\expert_{I}(x_{1:T})) : \expert_{I} \in \Experts_{T} \right\}.
\end{equation*}
\end{lemma}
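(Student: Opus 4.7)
The plan is to define each $\expert_I$ as a simulated run of the $\SOA$ whose internal version space is updated along $I$ as if a prescribed pattern of mistakes had occurred. Concretely, $\expert_I(x_{1:t})$ maintains a version space $V \subseteq \C$, initialized to $V = \C$; at each round $s = 1,\ldots,t$ it computes $y^{\SOA} = \SOA_{V}(x_s)$, then outputs $y_s = 1 - y^{\SOA}$ if $s \in I$ and $V \cap \{h : h(x_s) = 1 - y^{\SOA}\}$ is nonempty, and outputs $y_s = y^{\SOA}$ otherwise, and finally updates $V \gets V \cap \{h : h(x_s) = y_s\}$. The expert's output is $y_t$.

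For the $\supseteq$ inclusion, I would fix an arbitrary target $h \in \C$ and let $I^{*}$ be the set of rounds on which the $\SOA$, when fed the true labels $(x_1,h(x_1)),\ldots,(x_T,h(x_T))$ externally, produces a prediction disagreeing with $h$. Littlestone's original mistake-bound analysis gives $|I^{*}| \leq \LD$. A direct induction then shows that the version space maintained inside $\expert_{I^{*}}$ is always exactly $\{h'\in\C : h'(x_r)=h(x_r) \text{ for all } r<s\}$, and in particular always contains $h$; consequently the output at round $s$ equals $h(x_s)$: if the $\SOA$ agrees with $h$ then $s \notin I^{*}$ and no flip is taken, while if the $\SOA$ disagrees then $s \in I^{*}$ and $h$ itself certifies that the flip label is realizable in $V$, so the flip is taken and lands on $h(x_s)$.

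For the $\subseteq$ inclusion, the ``flip only if realizable'' safeguard ensures that $V$ stays nonempty at every step: the $\SOA$ branch preserves nonemptiness (since the $\SOA$ selects $\argmax_{y}\LD(V \cap \{h:h(x_s)=y\})$ and at least one side is nonempty whenever $V$ is), while the flip branch is taken only when its result is nonempty by construction. Hence after processing $x_{1:T}$ there is some $h \in \C$ consistent with the entire prediction sequence of $\expert_I$, witnessing that the sequence lies in the left-hand set.

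The main obstacle I anticipate is exactly the rounds on which the $\SOA$'s non-recommended label is unrealizable in the current version space; a naive flip there would make $\expert_I$ emit a label sequence not realized by any $h \in \C$ and thereby break the $\subseteq$ inclusion. The fix above (silently skip such a flip) costs nothing for the $\supseteq$ inclusion, because on any round contributing to the mistake set $I^{*}$ against a target $h \in \C$, the target itself witnesses realizability of the flip label. With this safeguard the construction yields one expert per $I \subseteq \{1,\ldots,T\}$ with $|I| \leq \LD$, matching the indexing in the lemma's statement.
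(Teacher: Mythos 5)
Your construction is correct and is essentially the paper's (i.e., Ben-David--P\'al--Shalev-Shwartz's) argument: simulate $\SOA$, flip its prediction exactly on the rounds in $I$ when the flipped label is still realizable in the current version space, and for a target $h \in \C$ take $I^{*}$ to be the (at most $\LD$) mistake rounds of $\SOA$ on $(x_t,h(x_t))$. The only difference is cosmetic --- you shrink the version space with the emitted label on every round, whereas the paper's variant updates it only on rounds in $I$ --- and both versions of the induction and of Littlestone's mistake bound go through.
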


Specifically, \citet*{ben2009agnostic} 
construct this family $\Experts_{T}$ by letting 
$\expert_{I}(x_{1:t})$ be the prediction 
of $\SOA_{H(t,I)}(x_t)$, 
where $H(0,I) = \C$, and $H(t,I)$ is inductively 
defined as  
$H(t,I) = \{ h \in H(t-1,I) : h(x_t) = 1-\SOA_{H(t-1,I)}(x_t)) \}$ 
if $t \in I$ and 
$\{ h \in H(t-1,I) : h(x_t) = 1-\SOA_{H(t-1,I)}(x_t)) \} \neq \emptyset$,
and $H(t,I) = H(t-1,I)$ otherwise.
The property guaranteed by Lemma~\ref{lem:benagnostic} 
then follows from the $\LD$ mistake bound of 
\citet*{littlestone:88} for $\SOA$: that is, 
the $\expert_{I}$ that agrees with $h$ on 
$x_{1:T}$ simply takes $I$ as the (at most $\LD$) 
times when $\SOA_{H}$ 
would make mistakes on $(x_1,h(x_1)),\ldots,(x_T,h(x_T))$, 
given that it updates $H \gets \{ h' \in H : h'(x_t) = h(x_t) \}$ after each mistake $(x_t,h(x_t))$.

We will also make use of a classic result for learning 
from expert advice for the absolute loss 
\citep*{vovk:90,vovk:92,littlestone:94,cesa-bianchi:97,kivinen:99,singer:99}; 
see Theorem 2.2 of \citet*{cesa-bianchi:06}.

\begin{lemma}
\label{lem:experts}
\citep*[][Theorem 2.2]{cesa-bianchi:06} 
For any $N,T \in \nats$ and $f_1,\ldots,f_N$ 
functions $\X^* \to [0,1]$, 
letting $\eta = \sqrt{(8/T)\ln(N)}$, 
for any $(x_1,y_1),\ldots,(x_T,y_T) \in \X \times [0,1]$, 
letting $w_{0,i}=1$ and 
$w_{t,i} = e^{- \eta \sum_{s \leq t} |f_i(x_{1:s})-y_s|}$ for each $t \leq T$, $i \leq N$, 
letting $\bar{f}_t(x_{1:t},y_{1:(t-1)}) = \sum_{i} w_{t-1,i} f_i(x_{1:t}) / \sum_{i'} w_{t-1,i'}$, 
it holds that 
\begin{equation*}
\sum_{t = 1}^{T} \left| \bar{f}_t(x_{1:t},y_{1:(t-1)}) - y_t \right| - \min_{1 \leq i \leq N} \sum_{t=1}^{T} \left| f_i(x_{1:t}) - y_t \right| \leq \sqrt{(T/2)\ln(N)}.
\end{equation*}
\end{lemma}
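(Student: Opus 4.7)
My plan is to prove Lemma~\ref{lem:experts} via the classical potential-function argument for the exponentially-weighted forecaster with absolute loss. The central object is the cumulative normalization $W_t := \sum_{i=1}^{N} w_{t,i}$, which I will sandwich from both sides and then combine with the prescribed choice of $\eta$.

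For the upper bound on $\ln(W_T/W_0)$, I would control the one-step ratio $W_t/W_{t-1}$. Writing $\ell_{t,i} := |f_i(x_{1:t}) - y_t| \in [0,1]$ and $p_{t,i} := w_{t-1,i}/W_{t-1}$ (a probability vector over $\{1,\ldots,N\}$ since all weights are strictly positive), the update rule gives $W_t/W_{t-1} = \sum_i p_{t,i} e^{-\eta \ell_{t,i}}$. Hoeffding's lemma, applied to the $[0,1]$-valued random variable $\ell_{t,I}$ with $I \sim p_t$, yields $\ln \sum_i p_{t,i} e^{-\eta \ell_{t,i}} \le -\eta \sum_i p_{t,i} \ell_{t,i} + \eta^2/8$. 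I then invoke convexity of $p \mapsto |p-y_t|$ and Jensen's inequality to get $|\bar f_t - y_t| = \bigl|\sum_i p_{t,i} f_i(x_{1:t}) - y_t\bigr| \le \sum_i p_{t,i}\ell_{t,i}$. Combining these two inequalities gives $\ln(W_t/W_{t-1}) \le -\eta |\bar f_t - y_t| + \eta^2/8$, and telescoping over $t = 1, \ldots, T$ gives $\ln(W_T/W_0) \le -\eta \sum_{t=1}^T |\bar f_t - y_t| + \eta^2 T/8$.

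For the lower bound, I would drop all but the largest weight: $W_T \ge \max_i w_{T,i} = \exp(-\eta \min_i L_{T,i})$, where $L_{T,i} = \sum_{t=1}^T \ell_{t,i}$. Since $W_0 = N$, this yields $\ln(W_T/W_0) \ge -\eta \min_i L_{T,i} - \ln N$. Chaining with the upper bound and rearranging gives $\sum_{t=1}^T |\bar f_t - y_t| - \min_i L_{T,i} \le \frac{\ln N}{\eta} + \frac{\eta T}{8}$. The right-hand side is minimized at $\eta^{\star} = \sqrt{(8/T)\ln N}$, which is exactly the value prescribed in the statement, and at this $\eta^{\star}$ the two summands coincide and sum to $\sqrt{(T/2)\ln N}$, yielding the claimed regret bound.

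There is no deep obstacle here: the argument is textbook, and the only genuinely substantive ingredients are Hoeffding's lemma and convexity of the absolute loss in its first argument. The one place that deserves care is the constant $1/8$ in Hoeffding's inequality for $[0,1]$-valued random variables, which is sharp and is precisely what propagates through the optimization in $\eta$ to give the exact constant $\sqrt{1/2}$ in the final bound — any looser use of a sub-Gaussian bound would give a worse constant than that in the statement. Minor housekeeping: strict positivity of every $w_{t-1,i}$ ensures $\bar f_t$ is well-defined, and the assumption $f_i, y_t \in [0,1]$ is exactly what makes $\ell_{t,i} \in [0,1]$ so that Hoeffding applies.
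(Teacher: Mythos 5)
Your proof is correct and is precisely the standard potential-function argument (Hoeffding's lemma on $\ln(W_t/W_{t-1})$, Jensen for the absolute loss, lower-bounding $W_T$ by the best expert's weight, then optimizing $\eta$) behind the result the paper simply cites as Theorem~2.2 of \citet*{cesa-bianchi:06}, with the constants worked out exactly as needed for $\sqrt{(T/2)\ln N}$. Since the paper offers no proof of its own beyond this citation, your write-up matches the intended argument and fills it in correctly.
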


We are now ready for the proof of 
Theorem~\ref{thm:agnostic}.

\begin{proof}[of Theorem~\ref{thm:agnostic}]
If $T < 10 \LD$, the regret bound is $\geq T$ 
(for appropriate constants), which trivially holds, 
so let us suppose $T \geq 10 \LD$.
Fix a value $\eps = \sqrt{(\LD/T)\log(eT/\LD)}$  
and let $\alg$ be the algorithm guaranteed by 
Theorem~\ref{thm:randomized-proper-detailed}: 
that is, for any $\C$-realizable sequence 
$(x_1,y_1),\ldots,(x_T,y_T)$, for each $t \leq T$,  
$\alg$ proposes a finite-support 
probability measure $\pi_t$ on $\C$ 
(namely, a uniform distribution on at most 
$O(\dvc / \eps^2)$ elements of $\C$) 
defined based solely on $(x_1,y_1),\ldots,(x_{t-1},y_{t-1})$, 
and there are at most $O\!\left(\frac{\LD}{\eps}\log\frac{1}{\eps}\right)$ 
times $t$ for which 
$| \pi_{t}( h : h(x_t) = 1 ) - y_t | > \eps$.

Now consider running $\alg$ for the sequence 
$(x_1,\expert_{I}(x_1)),(x_2,\expert_{I}(x_{1:2})),\ldots,(x_T,\expert_{I}(x_{1:T}))$ for $\expert_{I} \in \Experts_{T}$, where $\Experts_{T}$ is from Lemma~\ref{lem:benagnostic}. 
In particular, Lemma~\ref{lem:benagnostic} 
guarantees this is a $\C$-realizable sequence.
Let $\pi^{I}_{t}$ denote the corresponding 
probability measure 
that would be proposed on each round $t$, and let 
$\bar{h}^{I}_{t}(x) = \pi^{I}_{t}( h : h(x) = 1 )$.

Let $\target = \argmin_{h \in \C} \sum_{t=1}^{T} \ind[ h(x_t) \neq y_t ]$.
By Lemma~\ref{lem:benagnostic}, there exists 
$\expert_{I^*} \in \Experts_{T}$ with 
$\expert_{I^*}(x_{1:t}) = \target(x_t)$ 
for every $t \leq T$.
We therefore have 
\begin{equation*} 
\sum_{t=1}^{T} \ind\!\left[ \left| \bar{h}^{I^*}_{t}(x_t) - \target(x_t) \right| > \eps \right] = O\!\left( \frac{\LD}{\eps}\log\frac{1}{\eps} \right).
\end{equation*}
In particular, this immediately implies 
\begin{equation}
\label{eqn:Istar-bound}
\sum_{t=1}^{T} \left| \bar{h}^{I^*}_{t}(x_t) - \target(x_t) \right| \leq \eps T + O\!\left( \frac{\LD}{\eps}\log\frac{1}{\eps} \right).
\end{equation}

Let $\I_{T} = \{ I \subseteq \{1,\ldots,T\} : |I| \leq \LD \}$.
We will now treat the predictors 
$\bar{h}^{I}_{t}$, $I \in \I_{T}$, 
as \emph{experts} in the traditional framework of 
learning from expert advice for the absolute loss.
Let $N = \binom{T}{\leq \LD} = \sum_{j=0}^{\LD} \binom{T}{j}$.
Following Lemma~\ref{lem:experts}, 
let $\eta = \sqrt{(8/T)\ln(N)}$, 
and for each $I \in \I_{T}$ let 
$w_{0,I} = 1$ and $w_{t,I} = e^{-\eta \sum_{s \leq t} |\bar{h}^{I}_{s}(x_s) - y_s|}$ for each $t \leq T$.
Finally, define 
$\bar{h}'_t(x) = \sum_{I \in \I_T} w_{t-1,I} \bar{h}^{I}_{t}(x) / \sum_{I \in \I_T} w_{t-1,I}$.

By Lemma~\ref{lem:experts}, we have 
\begin{align*}
\sum_{t=1}^{T} \left| \bar{h}'_t(x_t) - y_t \right| 
& \leq \sqrt{(T/2)\ln(N)} + \min_{I \in \I_{T}} \sum_{t=1}^{T} \left| \bar{h}^{I}_{t}(x_t) - y_t \right| 
\\ & \leq \sqrt{\LD (T/2)\ln\!\left( \frac{e T}{\LD} \right)} +
\sum_{t=1}^{T} \left| \bar{h}^{I^*}_{t}(x_t) - y_t \right|. 
\end{align*}
By the triangle inequality, the rightmost expression 
is at most
\begin{equation*}
\sqrt{\LD (T/2)\ln\!\left( \frac{e T}{\LD} \right)} +
\sum_{t=1}^{T} \left| \bar{h}^{I^*}_{t}(x_t) - \target(x_t) \right| + \sum_{t=1}^{T} \left| \target(x_t) - y_t \right|.
\end{equation*}
Combining this with \eqref{eqn:Istar-bound} 
and plugging in $\eps = \sqrt{(\LD/T)\log(eT/\LD)}$
yields 
\begin{equation*} 
\sum_{t=1}^{T} \left| \bar{h}'_t(x_t) - y_t \right| 
- \sum_{t=1}^{T} \left| \target(x_t) - y_t \right|
=
O\!\left( \sqrt{\LD T \log\!\left(\frac{T}{\LD}\right) } \right).
\end{equation*}
The claimed regret now follows by noting that 
$\bar{h}'_t(x) = p'_t( h : h(x) = 1 )$ by 
choosing $p'_t = \sum_{I \in \I_T} \frac{w_{t-1,I}}{\sum_{I' \in \I_T} w_{t-1,I'}} \pi^{I}_{t}$.
Since each $\pi^{I}_{t}$ is a finite-support 
probability measure on $\C$, 
$p'_t$ is also a finite-support probability measure 
on $\C$.

To conclude, we note that the probability measures $p'_t$ 
in the above proof may have supports of size  
up to 
$\tilde{O}\!\left(\dvc \left(\frac{eT}{\LD}\right)^{\LD+1} \right)$.
However, we can apply the same sparsification 
argument used in previous sections: that is, 
since $p'_t$ has finite support, 
we can apply the 
classic result on the size of $\eps$-approximating 
sets (Lemma~\ref{lem:vc-bound} of Appendix~\ref{sec:vc-bounds}), 
which implies there exists $\bar{h}_t \in \Vote(\C^{m})$ 
with $m = O\!\left(\frac{\dvc}{\eps^2}\right)$
such that 
$\sup_{x} | \bar{h}_t(x) - p'_t(h : h(x)=1) | \leq \eps$,
so that using 
$\bar{h}_t$
on each round, rather than $\bar{h}'_t$,
by the triangle inequality we have 
\begin{align*}
& \sum_{t=1}^{T} \left| \bar{h}_t(x_t) - y_t \right| 
- \min_{h \in \C} \sum_{t=1}^{T} \left| h(x_t) - y_t \right| 
\\ & \leq \eps T + \sum_{t=1}^{T} \left| \bar{h}'_t(x_t) - y_t \right| - \min_{h \in \C} \sum_{t=1}^{T} \left| h(x_t) - y_t \right|
= \eps T + O\!\left( \sqrt{\LD T \log(T/\LD)} \right).
\end{align*}
Taking $\eps = \sqrt{(\LD/T)\log(T/\LD)}$ then retains 
the regret bound 
$O\!\left( \sqrt{\LD T \log(T/\LD)} \right)$ 
required by Theorem~\ref{thm:agnostic}, 
with
$\bar{h}_t \in \Vote(\C^m)$ 
for $m = O\!\left( \frac{\dvc T}{\LD \log(T/\LD)}\right)$.
\end{proof}

\bibliography{learning}

\appendix

\section{Uniform Convergence Bounds for VC Classes}
\label{sec:vc-bounds}

\begin{lemma}
\label{lem:vc-bound}
There is a universal constant $\vcconst$ 
such that, for any set $\Z$ and any finitely-supported 
probability measure $P$ on $\Z$, 
for any set $\F$ of functions $\Z \to \{0,1\}$, 
for $m \in \nats$ and $\eps \in (0,1]$, 
if $m \geq \frac{\vcconst \vc(\F)}{\epsilon}\log\frac{1}{\epsilon}$, 
then there exists $z_1,\ldots,z_m \in \Z$ 
such that every $f \in \F$ with 
$P( z : f(z) = 1 ) \geq \frac{2}{3}\eps$ 
has 
$\frac{1}{m}\sum_{t=1}^{m} f(z_t) > \frac{\eps}{2}$.
Moreover, if $m \geq \frac{\vcconst \vc(\F)}{\eps^2}$, 
then there exists $z_1,\ldots,z_m \in \Z$ 
such that every $f \in \F$ satisfies 
$\left| P( z : f(z) = 1 ) - \frac{1}{m} \sum_{t=1}^{m} f(z_t) \right| < \eps$.
\end{lemma}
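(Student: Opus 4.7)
\textbf{Proof plan for Lemma~\ref{lem:vc-bound}.} The strategy is the standard probabilistic method: draw $Z_1,\ldots,Z_m$ i.i.d.\ from $P$, and show that the probability (over this draw) that the desired uniform guarantee fails is strictly less than $1$. Since $P$ is finitely supported, all quantities are well defined and no measurability concerns arise; moreover the realized sequence $z_1,\ldots,z_m$ lies in the support of $P$, which is a subset of $\Z$. The two parts differ only in which classical VC deviation inequality we invoke.

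For the first part, the plan is to bound the probability that some $f \in \F$ with $P(f{=}1) \ge \tfrac{2}{3}\eps$ has empirical frequency $\tfrac{1}{m}\sum_{t=1}^{m} f(Z_t) \le \tfrac{\eps}{2}$. I would use the one-sided relative Vapnik--Chervonenkis inequality (equivalently, the Blumer--Ehrenfeucht--Haussler--Warmuth PAC bound), which, via symmetrization by a ghost sample and Sauer's lemma, gives a bound of the form
\[
\Pr\!\left[\exists f \in \F : P(f{=}1) \ge \tfrac{2}{3}\eps \text{ and } \tfrac{1}{m}\sum_{t=1}^m f(Z_t) \le \tfrac{\eps}{2}\right] \le 2\left(\tfrac{2em}{\vc(\F)}\right)^{\vc(\F)} \exp\!\left(-c' \eps m\right)
\]
for a universal constant $c'>0$. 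For $m \ge \tfrac{\vcconst \vc(\F)}{\eps}\log\tfrac{1}{\eps}$ with $\vcconst$ chosen suitably, this bound is strictly less than $1$, and hence some realization $z_1,\ldots,z_m$ witnesses the claim.

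For the second part, the argument is identical in structure but applies the two-sided symmetric Vapnik--Chervonenkis uniform deviation inequality: for a universal constant $c''>0$,
\[
\Pr\!\left[\sup_{f \in \F}\, \left|\, P(f{=}1) - \tfrac{1}{m}\sum_{t=1}^m f(Z_t)\,\right| \ge \eps\right] \le 4\left(\tfrac{2em}{\vc(\F)}\right)^{\vc(\F)} \exp\!\left(-c'' \eps^2 m\right).
\]
For $m \ge \tfrac{\vcconst \vc(\F)}{\eps^2}$ with $\vcconst$ large enough, the right-hand side is below $1$, so again a suitable $z_1,\ldots,z_m$ exists by the probabilistic method.

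There is no real obstacle here: both inequalities are textbook and only the constant $\vcconst$ needs to be absorbed to swallow the growth-function polynomial factor. The only mild care is to choose $\vcconst$ uniformly so it serves both parts (the bound dictated by part~2 is the binding one, and one can simply set a single $\vcconst$ large enough for both symmetrization-plus-Sauer computations). Since $P$ is finitely supported, one may alternatively cite these inequalities directly in the form given by \citet*{vapnik:71,vapnik:74} without needing any measurability hypothesis.
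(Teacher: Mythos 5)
Your first part is correct and is essentially the paper's own argument: the paper likewise draws $Z_1,\ldots,Z_m \sim P^m$ and invokes the one-sided relative deviation bound of Vapnik and Chervonenkis (Theorem 4.4 of Vapnik, 1998), concluding by the probabilistic method; your explicit symmetrization-plus-Sauer computation is an unpacked version of that citation, and the arithmetic there does close (note the paper's convention $\log(1/\eps)=\max\{\ln(1/\eps),1\}$, which handles $\eps$ near $1$).

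The second part has a genuine gap. The inequality you invoke,
\[
\Pr\!\left[\sup_{f\in\F}\left|P(f{=}1)-\tfrac{1}{m}\textstyle\sum_{t} f(Z_t)\right|\ge\eps\right]
\le 4\left(\tfrac{2em}{\vc(\F)}\right)^{\vc(\F)}e^{-c''\eps^2 m},
\]
is \emph{not} below $1$ at $m=\vcconst\vc(\F)/\eps^2$ for any fixed $\vcconst$: taking logarithms, you would need $\vc(\F)\ln\!\bigl(2e\vcconst/\eps^{2}\bigr)< c''\vcconst\vc(\F)$, and the left side contains a $2\ln(1/\eps)$ term that is unbounded as $\eps\to 0$ while the right side is a constant. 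The plain symmetrization-plus-Sauer route therefore only yields the sample size $m=O\bigl(\tfrac{\vc(\F)}{\eps^2}\log\tfrac{1}{\eps}\bigr)$, with a superfluous logarithmic factor, and your remark that ``only the constant $\vcconst$ needs to be absorbed to swallow the growth-function polynomial factor'' is exactly where the argument fails. To obtain the log-free rate $m=O(\vc(\F)/\eps^2)$ asserted in the lemma one needs a strictly stronger tool: either the chaining/entropy-integral bound $\E\sup_{f\in\F}\bigl|\tfrac{1}{m}\sum_t f(Z_t)-P(f{=}1)\bigr| = O\bigl(\sqrt{\vc(\F)/m}\bigr)$ — which is precisely what the paper cites (Talagrand 1994; van der Vaart and Wellner 1996) — or the refined double-sampling argument of Li, Long, and Srinivasan. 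Your overall probabilistic-method framing is right and matches the paper; only the deviation inequality used in the second part must be upgraded.
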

\begin{proof}
The first claim follows from the 
relative uniform convergence bounds of \citet*{vapnik:74}, 
which hold without further restrictions on $\F$ since 
$P$ has finite support; see Theorem 4.4 of \citet*{vapnik:98}.
Specifically, 
Theorem 4.4 of \citet*{vapnik:98} guarantees that 
(for an appropriate constant $\vcconst$), 
for $(Z_1,\ldots,Z_m) \sim P^m$, 
with nonzero probability, 
every $f \in \F$ with $P( z : f(z) = 1 ) \geq \frac{2}{3} \eps$ 
has $\frac{1}{m} \sum_{t=1}^{m} f(Z_t) > \frac{\eps}{2}$, 
which then implies there exist at least one such 
sequence $z_1,\ldots,z_m$ satisfying the claim.
Similarly, for the second claim, 
by the classic uniform convergence guarantees 
based on the chaining argument 
(see \citealp*{talagrand:94,van-der-Vaart:96}), 
which again hold without further restrictions to $\F$ 
because $P$ has finite support,
it holds that, 
if $m \geq \frac{\vcconst \vc(\F)}{\eps^2}$ 
(for an appropriate constant $\vcconst$), 
for $(Z_1,\ldots,Z_m) \sim P^m$, 
with nonzero probability, every $f \in \F$ satisfies  
$\left| P( z : f(z) = 1 ) - \frac{1}{m} \sum_{t=1}^{m} f(z_t) \right| < \eps$.
In particular, this implies there must exist 
at least one such sequence $z_1,\ldots,z_m$ satisfying 
the claim.
\end{proof}

\end{document}